\newcommand\BibTeX{{\rmfamily B\kern-.05em \textsc{i\kern-.025em b}\kern-.08em
T\kern-.1667em\lower.7ex\hbox{E}\kern-.125emX}}
\newtheorem{proposition}{Proposition}
\newtheorem{lemma}{Lemma}
\newtheorem{result}{Preliminary result}
\newcommand{\dd}[0]{\mbox{2{\sc d}}\xspace}
\newcommand{\ddd}[0]{\mbox{3{\sc d}}\xspace}
\newcommand{\matx}[1]{\mathbf{\uppercase{#1}}}
\newcommand{\quat}[1]{\mathsf{\lowercase{#1}}}
\newcommand{\vect}[1]{\mathbf{\lowercase{#1}}}
\newcommand{\id}[1]{\matx{I}_{#1}}
\newcommand{\zero}[2]{\matx{0}_{#1 \times #2}}
\newcommand{\rx}[0]{{\matx{R}_x}}
\newcommand{\trx}[0]{{\tilde{\matx{R}}_x}}
\newcommand{\tx}[0]{{\vect{t}_x}}
\newcommand{\ra}[0]{{\matx{R}_a}}
\newcommand{\ta}[0]{{\vect{t}_a}}
\newcommand{\ua}[0]{{\vect{u}_a}}
\newcommand{\rb}[0]{{\matx{R}_b}}
\newcommand{\tb}[0]{{\vect{t}_b}}
\begin{document}
\runninghead{Andreff, Horaud and Espiau}
\title{Robot Hand-Eye Calibration using Structure-from-Motion}
\author{Nicolas Andreff\affilnum{1}, Radu Horaud\affilnum{2} and Bernard Espiau\affilnum{2}}
\affiliation{\affilnum{1}Institut Fran\c{c}ais de M\'ecanique Avanc\'ee,
BP 265, 63175 Aubi\`ere Cedex, France.
\affilnum{2}INRIA Rh\^one-Alpes and GRAVIR-IMAG,
655, av. de l'Europe, 38330 Montbonnot Saint Martin, France.}

\begin{abstract}
In this paper we propose a new flexible method for hand-eye calibration.
The vast majority of existing hand-eye calibration techniques requires a
calibration rig which is used in conjunction with camera pose estimation
methods. Instead, we combine structure-from-motion with known robot
motions and we show that the solution can be obtained in linear form.
The latter solves for both the hand-eye parameters and for the unknown
scale factor inherent with structure-from-motion methods. The algebraic
analysis that is made possible with such a linear formulation allows to
investigate not only the well known case of general screw motions but
also such singular motions as pure translations, pure rotations, and
planar motions. In essence, the robot-mounted camera looks to an unknown
rigid layout, tracks points over an image sequence and estimates the
camera-to-robot relationship. Such a self calibration process is
relevant for unmanned vehicles, robots working in remote places, and so
forth. We conduct a large number of experiments which validate the
quality of the method by comparing it with existing ones.
\end{abstract}

\maketitle

\section{Introduction}
The background of this work is the guidance of a robot by visual
servoing~\citep{espiau92,mathese}. In this framework, a basic issue is
to determine the spatial relationship
between a camera mounted onto a robot end-effector (Fig.~\ref{tete})
and the end-effector itself. This spatial 
relationship is a rigid transformation, a rotation and a
translation, known as the hand-eye transformation.
The determination of this transformation is called hand-eye
calibration.

The goal of this paper is to describe a technique allowing the hand-eye
calibration to be performed in the working site. In practice, this
requires that: 
\begin{itemize}
\item No calibration rig will be allowed.

A calibration rig is a very accurately manufactured \ddd object
holding targets as visual features.
Mobile robots and space applications of robotics are typical examples
where a calibration rig cannot be used. During their mission, such
robots may nevertheless need to be calibrated again. However, as
affordable on-board weight is limited, they 
can not carry a calibration object and should use their surrounding
environment instead. Thus, the availability of a hand-eye
self-calibration method is mandatory.

\item Special and/or large motions are difficult to achieve and hence
should be avoided. 

Indeed, since the hand-eye system must be calibrated on-site, the
amount of free robot workspace is limited and the motions
have therefore to be of small amplitude. Therefore, the
self-calibration method must be able to handle  
a large variety of motions, including small ones.

\end{itemize}

\begin{figure}[t]
\centerline{\rotatebox{-90}{\resizebox{!}{7cm}{\includegraphics{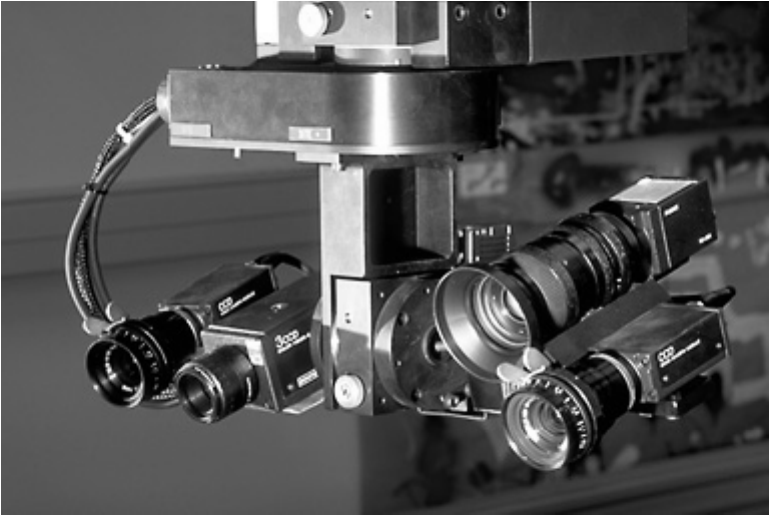}}}}
\caption{Some cameras mounted on our 5 DOF robot.}
\label{tete} 
\end{figure}

Hand-eye calibration was first studied a 
decade ago~\citep{tsai89,shiu89a}.
It was shown that any solution to the problem requires to consider
both Euclidean end-effector motions and camera motions\footnote{Notice
that this requirement may be implicit as in~\citep{remy97}.}.
While the end-effector motions can be obtained from the 
encoders, the camera motions are to be computed from the images.  
It was also shown, both
algebraically~\citep{tsai89} and geometrically~\citep{chen91c}, that
a sufficient condition to the uniqueness of the solution is the
existence of two calibration motions with non-parallel rotation axes. 

Several methods were
proposed~\citep{tsai89,daniilidis96b,horaud95e,shiu89a,chou91a,wang92b}
to solve for hand-eye calibration under 
the assumption that both end-effector and camera motions were known.
All these methods represent rotations by a minimal parameterization
and use pose algorithms to estimate the camera position relatively to
the fixed calibration rig.
Recall that pose algorithms
require the \ddd Euclidean coordinates of the rig targets to be known
together with their associated \dd projections onto each image.


Another approach is proposed by Wei {\em et al.}~\citep{wei98a}, who
perform simultaneously 
hand-eye calibration and camera calibration without any calibration
rig. However, this requires a complex non-linear
minimization and the use of a restrictive class of calibration
motions. Moreover, no algebraic analysis of the problem is given.

With regard to the existing approaches, we propose a new
hand-eye self-calibration method which 
exploits two main ideas. The first idea is that a
specific algebraic treatment is necessary to handle small
rotations. Indeed, minimal representations of rotation may be either
singular (Euler angles, axis/angle) or non unique (quaternions) for
specific configurations. Moreover, when the rotation angle is small,
it is hard to accurately estimate the rotation axis. Yet, existing
approaches need to estimate the latter. Hence, one should use, as
far as hand-eye calibration is concerned, as large calibration motions 
as possible: a rule for such a choice of large
calibration motions is even given in~\citep{tsai89}. Instead,
we prefer to use orthogonal matrices, that are always defined, even
though we need either to estimate more parameters (9 instead of~3
to~4) or to take into account the orthogonality constraints.

The second idea is that camera motion can be 
computed from structure-from-motion algorithms rather than from pose
algorithms. Structure-from-motion estimation methods give very accurate results in
terms of camera motion parameters and a variety of techniques exist from
factorization methods to non-linear bundle-adjustment. From a practical
point of view an iterative factorization method such as the one
described in \citep{christy96a} achieves a good compromise between
purely linear methods and highly non-linear ones. 

Our contributions can be summarized in the following. Firstly,
hand-eye calibration 
is reformulated in order to take into account the estimation of
camera motions from structure-from-motion algorithms. Indeed, camera
motions are thus obtained up to an unknown scale factor, which is
introduced in the formulation.
Secondly, a linear
formulation, based on the representation of rotations by orthogonal
matrices, is proposed which enables small  
calibration motions. It allows us to give a common framework to
the (already solved) general motion case as well as to singular motion cases
(where the general solution fails) such as: pure translations, pure
rotations, and planar motions. 
Thirdly, an algebraic study of this linear formulation
is performed which shows that partial calibration can nevertheless be
performed when the sufficient condition for the uniqueness of the
solution is not fulfilled. Fourthly, in-depth experiments are conducted
with comparison to other methods. They show that the use of
structure-from-motion does not affect the numerical
accuracy of our method with respect to existing ones.

The remainder of this paper is organized as
follows. Section~\ref{classic} recalls the classical formulation of
hand-eye calibration and the structure-from-motion paradigm. 
Section~\ref{new} gives contains the formulation of the linear
hand-eye self-calibration method.
Section~\ref{analysis} contains its algebraic analysis.
Finally, Section~\ref{expe} 
gives some experimental results and Section~\ref{conclusion} concludes
this work. 

\section{Background}
\label{classic}

In this section, after defining the notation used in this article,
we briefly present the classical formulation of hand-eye
calibration with a short description of three methods that will be
used as references in the experimental section (Section~\ref{expe}).
We then describe the estimation of camera motions, concluding
in favor of Euclidean reconstruction rather than pose computation.

\subsection{Notation}

Matrices are represented by upper-case bold-face
letters (e.g. $\matx{R}$) and vectors by lower-case bold-face letters
(e.g. $\vect{t}$). 

Rigid transformations (or, equivalently, Euclidean
motions) are represented with homogeneous matrices of the form:
\[
\begin{pmatrix}
\matx{R}&\vect{t}\\
0\ 0\ 0& 1
\end{pmatrix}
\]
where $\matx{R}$ is a $3\times 3$ rotation matrix and $\vect{t}$ is a
$3\times 1$ translation vector. This 
rigid transformation will be often referred to as the couple
$(\matx{R},\vect{t})$. 

In the linear formulation of the problem, we will use the linear
operator $vec$
and the tensor product, also known as Kronecker product.
The $vec$ operator was introduced in \citep{Neudecker69} and
reorders (one line after the other) the coefficients of a $(m \times
n)$ matrix $\matx{M}$ into the $mn$ vector
\[
vec(\matx{M}) = (M_{11}, \ldots , M_{1n}, M_{21}, \ldots, M_{mn})^T\]
The Kronecker product~\citep{Bellman60,Brewer78} is noted
$\otimes$. From two matrices $\matx{M}$ 
and $\matx{N}$ with respective dimensions $(m \times n)$ and $(o
\times p)$, it defines the resulting $(mo \times np)$ matrix:
\begin{equation}
\matx{M} \otimes \matx{N} = \left(
\begin{matrix}
M_{11} \matx{N} & \ldots & M_{1n} \matx{N}\\
\vdots & \ddots & \vdots \\
M_{m1} \matx{N} & \ldots & M_{mn} \matx{N}
\end{matrix}
\right)
\end{equation}

\subsection{Hand-eye problem formulation}
We present here the classical
approach~\citep{tsai89,chen91c,daniilidis96b,horaud95e,shiu89a,chou91a,wang92b}
which states that, when the camera undergoes a motion
$\matx{A}=(\ra,\ta)$ and that the corresponding end-effector motion
is $\matx{B}=(\rb,\tb)$, then they are conjugated by the
hand-eye transformation $\matx{X}=(\rx,\tx)$ (Fig.~\ref{ax=xb.fig}).
This yields the following homogeneous matrix equation:
\begin{equation}
\matx{A} \matx{X} = \matx{X} \matx{B}
\label{ax=xb.eq}
\end{equation}
where $\matx{A}$ is estimated, $\matx{B}$ is assumed to be known and
$\matx{X}$ is the unknown.

\begin{figure}[t!]
\centerline{\resizebox{!}{4cm}{\includegraphics{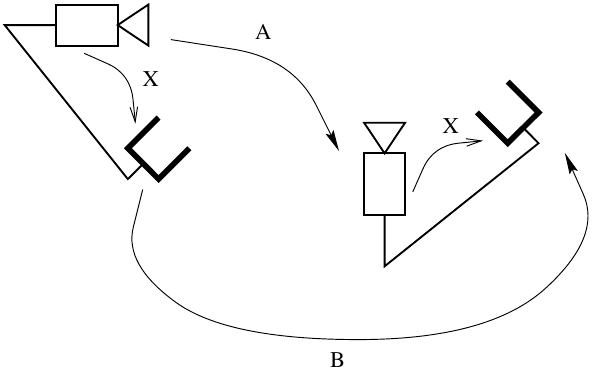}}}
\caption{End-effector (represented here by a
gripper) and camera motions are conjugated by the hand-eye
transformation $\matx{X}$.}
\label{ax=xb.fig} 
\end{figure}

Equation~(\ref{ax=xb.eq}),
applied to each motion~$i$, splits into:
\begin{eqnarray}
\ra_i \rx &=& \rx \rb_i \label{tsai1}\\
\ra_i \tx + \ta_i  &=&  \rx \tb_i + \tx \label{tsai2}
\end{eqnarray}

In the method proposed in~\citep{tsai89}, the first equation is solved
by least-square minimization of a linear system obtained by using the
axis/angle representation of the 
rotations. Once $\rx$ is known, the second equation is also solved
with linear least squares techniques. 

To avoid this two-stage solution which propagates the error on the
rotation estimation onto the translation, a
non-linear minimization method based on the representation
of the rotations with unit quaternions was proposed in~\citep{horaud95e}. 
Similarly, a method based on the unit dual quaternion
representation of Euclidean motions was developed in~\citep{daniilidis96b} to
solve simultaneously  
for hand-eye rotation and hand-eye translation.

\subsection{Computing the camera motions}

In the prior
work~\citep{tsai89,chen91c,daniilidis96b,horaud95e,shiu89a,chou91a,wang92b},
camera motions were computed considering images one  
at a time, as follows.
First, \dd-to-\ddd correspondences were established between the \ddd
targets on the calibration rig and their \dd projections onto 
each image~$i$. Then, from the \ddd coordinates of the targets, their
\dd projections and the intrinsic camera
parameters, the pose (i.e. position and orientation) of the
camera with respect to the
calibration rig is estimated $\matx{P}_i=(\matx{R}_i, \vect{t}_i)$.
Finally, the camera motion between image
$i-1$ and image $i$ $\matx{A}_i=(\ra_i, \ta_i)$
is hence obtained by simple composition (Fig.~\ref{pose}): 
\[
\matx{A}_i=(\matx{R}_{i}\matx{R}_{i-1}^T,
\vect{t}_i-\matx{R}_{i}\matx{R}_{i-1}^T\vect{t}_{i-1})
\]

Alternatively, one may simultaneously consider all the images that were
collected during camera motion. Thus, one may use 
the multi-frame structure-from-motion paradigm (see~\citep{oliensis97a} for 
a review). The advantage of structure-from-motion over pose algorithms 
is that the former does not require any knowledge about the observed
\ddd object. Indeed, structure-from-motion only
relies on \dd-to-\dd correspondences. 
The latter are easier to obtain
since they depend on image information only.
There are two classes of \mbox{(semi-)automatic} methods to find them: a
discrete approach, known as {\em matching}~\citep{gruen85a}, and a
continuous approach, known as {\em tracking}~\citep{hager99a}. 

\begin{figure}[t!]
\psfrag{calibration block}{calibration rig}
\psfrag{gripper}{gripper}
\psfrag{camera}{camera}
\psfrag{X}{$\matx{X}$}
\psfrag{P1}{$\matx{P}_{i-1}$}
\psfrag{P2}{$\matx{P}_i$}
\psfrag{Pn}{$\matx{P}_n$}
\psfrag{A1}{$\matx{A}_i$}
\psfrag{B1}{$\matx{B}_i$}
\centerline{\resizebox{!}{6.5cm}{\includegraphics{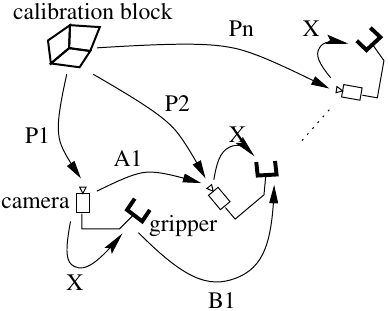}}}
\caption{Hand-eye calibration from pose estimation}
\label{pose} 
\end{figure}


A relevant class of structure-from-motion methods is known as Euclidean
reconstruction~\citep{cui94a,taylor91a,polman94a,tomasi91a,konderink91a,christy96a}.
It assumes that the camera is calibrated (i.e. the camera intrinsic
parameters are known). From this knowledge, one can reconstruct the 
structure of the scene and the motion of the camera up to an unknown
scale factor (Fig.~\ref{ambiguite}) using various methods (see below). 
This unknown scale factor is a global scale factor associated with the fact that
rigidity is defined up to a similitude and is the same for all the camera
motions in the sequence (Fig.~\ref{ambiguite2}).
Therefore, the estimated camera motions are of the form:
\begin{equation}
\matx{a}_i(\lambda) = 
\begin{pmatrix}
\ra_i&\lambda \ua{}_i\\
0\ 0\ 0&1
\end{pmatrix}
\label{a_lambda}
\end{equation}
where $\ra_i$ is the
rotation of the camera between image $i-1$ and image $i$, $\lambda$
is the unknown scale factor and $\ua{}_i$ is a vector, parallel to the 
camera translation $\ta{}_i$ and such that 
\begin{equation}
\ta_i=\lambda \ua_i
\label{ua}
\end{equation}
Taking, without loss of generality, the first motion as a motion
with non zero translation allows to arbitrarily choose $\ua_1$ as a unit 
vector. Hence, $\lambda = \| \ta_1 \|$.
Consequently, the $\ua_i$'s are related by: 
$ \ua_i = \ta_i/\|\ua_1\|$ and $\lambda$ can be
interpreted as the unknown norm of the first translation.

In summary, camera rotations are completely recovered while camera
translations are recovered up to a single unknown scale factor. 

\begin{figure}[t!]
\centerline{%
   \resizebox{!}{5cm}{%
        \includegraphics{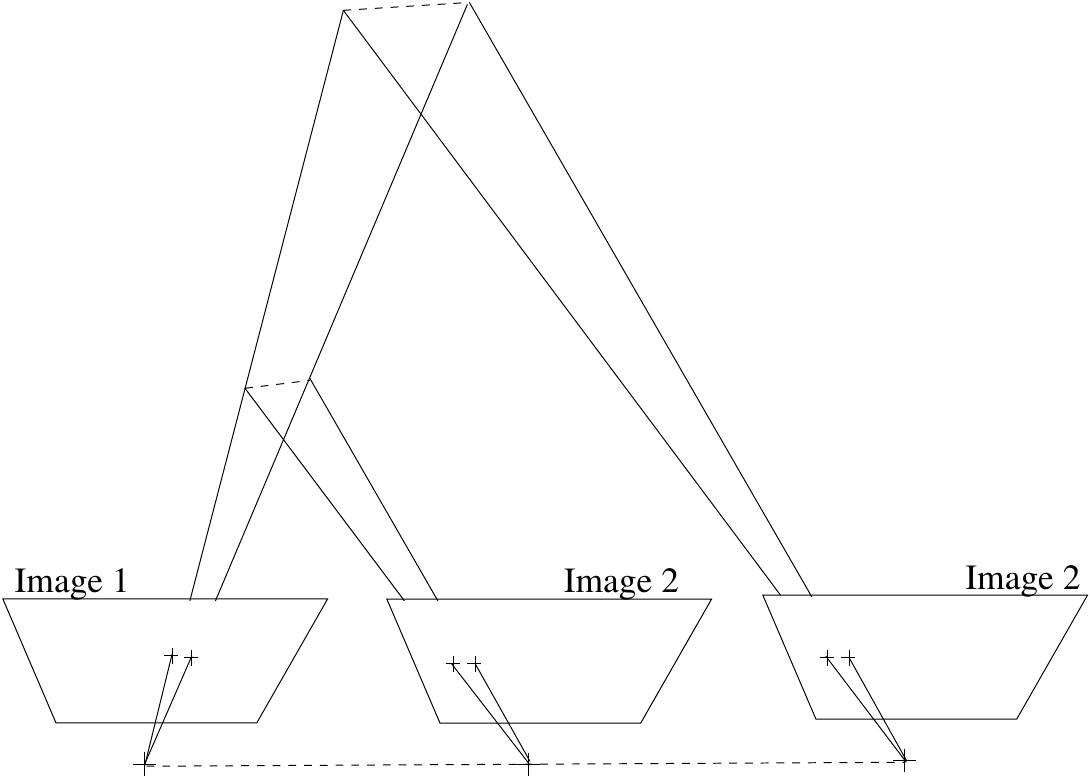}
   }
}
\caption{\label{ambiguite} Given two images, structure-from-motion methods determine
the direction of the baseline between the two camera positions, but
not the baseline length. Thus, the size of the reconstructed
object is not determined since it is proportional to the unknown baseline
length.}
\end{figure}

\begin{figure}[h!]
\centerline{%
   \resizebox{!}{5cm}{%
        \includegraphics{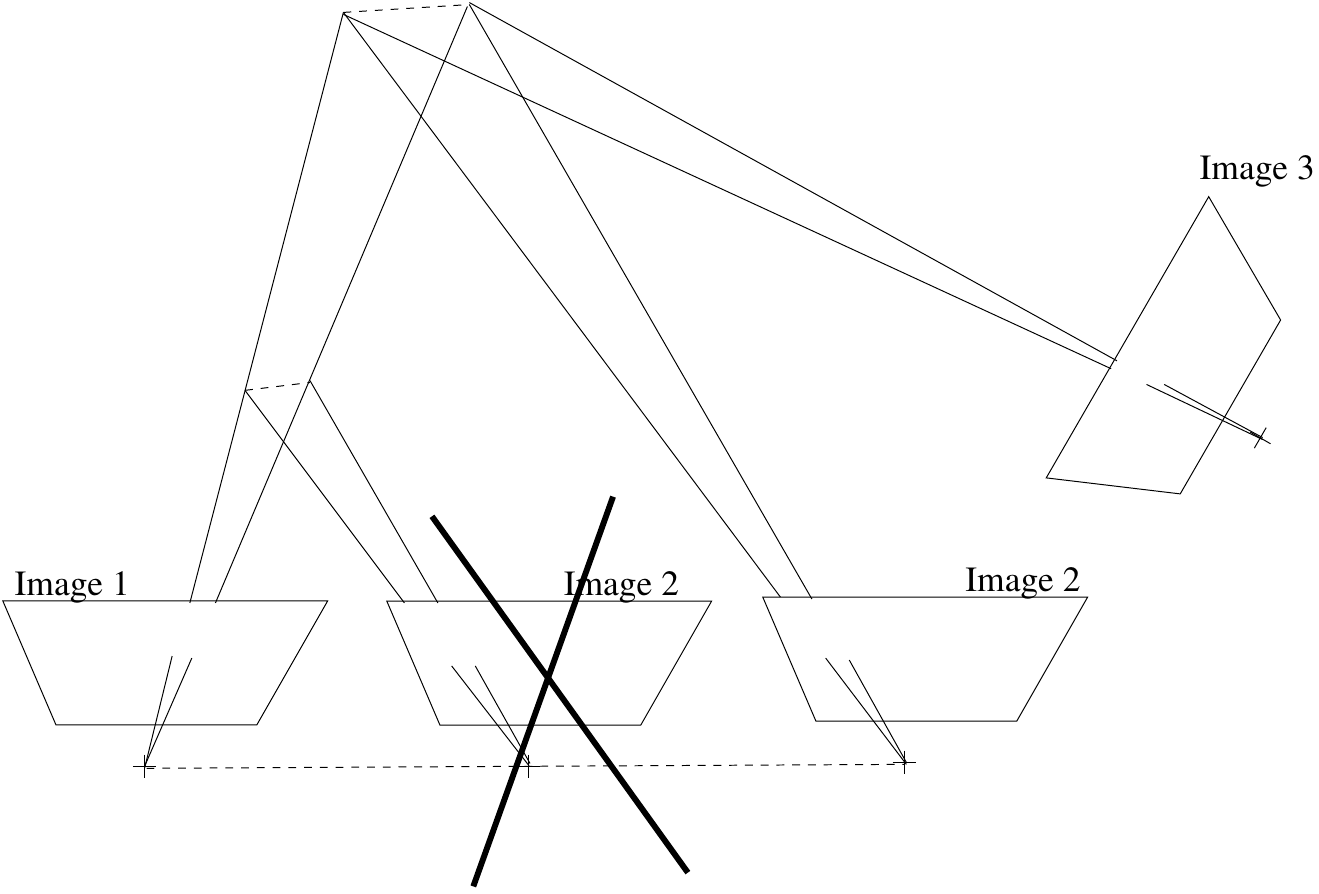}
   }
}
\caption{\label{ambiguite2}  Once the scale factor is resolved between
the first two images, the second camera position is uniquely defined
with respect to the first one and consequently, the following camera
positions are also uniquely defined.}
\end{figure}

In practice, which structure-from-motion algorithm should we chosen?
Affine camera models~\citep{polman94a,tomasi91a,konderink91a} yield
simple linear solutions to the
Euclidean reconstruction problem, based on matrix factorization.
However, affine models are first-order approximations of the
perspective model. Hence, only approximations of the 
Euclidean camera motions can be obtained.
Besides, solutions exist~\citep{cui94a,taylor91a}, based on
the perspective 
model, that offer some Euclidean information on the camera motions,
but are non linear. 
Our choice lies in fact between these two classes of
methods: we propose a method for Euclidean
reconstruction by successive affine approximations of the perspective
model~\citep{christy96a}, which combines the simplicity of affine methods and
the accuracy of non linear methods.

In summary, in order to estimate camera motions, structure-from-motion
methods are more flexible than
pose computation methods, since no \ddd model is needed. The drawback of
lowering this constraint is 
that camera motions are estimated up to an unknown scale factor which
we must take into account in the hand-eye self-calibration method.

\section{A new linear formulation}
\label{new}

In this section, we first modify the formulation of hand-eye
calibration in order to take into account the use of Euclidean reconstruction
to compute camera motions. Then, we give a solution to this problem
which handles small rotations.

\subsection{Using structure-from-motion}

For using structure-from-motion to estimate camera motions, we have to
take into account the unknown scale factor $\lambda$.
Indeed, the homogeneous equation (\ref{ax=xb.eq})  becomes (compare
Fig.~\ref{pose} and Fig.~\ref{rec3d}):
\begin{equation}
\matx{A}_i(\lambda) \matx{X}=\matx{X}\matx{B}_i
\end{equation}
where $\matx{A}_i(\lambda)$ is the $i$th estimated camera
motion.
From~(\ref{a_lambda}) and~(\ref{ua}), we thus obtain a set of two
equations, similar to~(\ref{tsai1})--(\ref{tsai2}):
\begin{eqnarray}
\ra_i \rx &=& \rx \rb_i \label{ax=xb.rot}\\
\ra_i \tx + \lambda \ua{}_i &=& \rx \tb_i + \tx \label{ax=xb.trans}
\end{eqnarray}
where the unknowns are now $\rx$, $\tx$ and $\lambda$.


\subsection{Linear formulation}

We propose a new formulation which handles rotations of any kind.
Its underlying idea is to embed the rotation part of the problem,
intrinsically lying in $SO(3)$, in a larger space in order to
deliberately free ourselves from the non-linear orthogonality
constraint. This allows us to easily find 
a subspace of matrices verifying~(\ref{ax=xb.eq}). Then, the
application of the orthogonality constraint selects, in this subspace,
the unique rotation which is solution to the problem. 
This general idea is very powerful here since, as we will see, the
non-linear orthogonality constraint reduces to a linear norm
constraint. 

\begin{figure}[h!]
\psfrag{X}{$\matx{X}$}
\psfrag{a1}{$\matx{A}_i(\lambda)$}
\psfrag{B1}{$\matx{B}_i$}
\psfrag{R1}{ }
\psfrag{R2}{ }
\psfrag{Rn}{ }
\centerline{\resizebox{!}{6.5cm}{\includegraphics{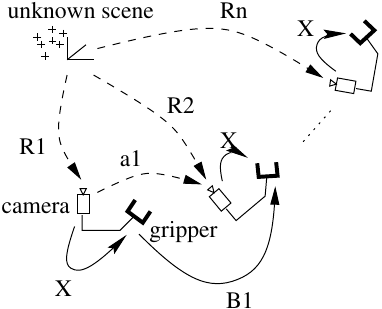}}}
\caption{From images of an unknown scene and the
knowledge of the intrinsic parameters of the camera,
structure-from-motion algorithms estimate, up to an unknown scale
factor $\lambda$, the camera motions $\matx{a}_i(\lambda)$.}
\label{rec3d} 
\end{figure}

The new formulation is inspired by the similarity
of~(\ref{ax=xb.rot}) with the Sylvester equation: $\matx{U} \matx{V} + 
\matx{V} \matx{W} = \matx{T}$. This matrix equation, which often
occurs in system theory~\citep{Brewer78}, is usually formulated as a
linear system~\citep{Rotella89,Hu92,Deif95}: 
\begin{equation}
 (\matx{U} \otimes \matx{I} + \matx{I} \otimes \matx{W}) vec(\matx{V})
= vec(\matx{T})
\label{sylvester}
\end{equation}
One fundamental property of the Kronecker product is~\citep{Brewer78}:
\begin{equation}
vec(\matx{C}\matx{D}\matx{E})=(\matx{C}\otimes\matx{E}^T)vec(\matx{D})
\label{fond}
\end{equation}
where $\matx{C}$,$\matx{D}$,$\matx{E}$ are any matrices with adequate
dimensions. Applying this relation to equation~(\ref{ax=xb.rot})
yields:
\begin{equation}
(\ra_i \otimes \rb_i) vec(\rx) = vec(\rx)
\label{kron.rot}
\end{equation}
Introducing the notation $vec(\rx)$ in
equation~(\ref{ax=xb.trans}), we obtain:
\begin{equation}
(\id{3} \otimes (\tb_i^T)) vec(\rx) + (\id{3} - \ra_i) \tx - \lambda
\ua{}_i =0
\label{kron.trans}
\end{equation}
We can then state the whole problem as a single homogeneous linear system:
\begin{multline}
\label{syst}
\left(
\begin{matrix}
\id{9} -\ra_i \otimes \rb_i &
\zero{9}{3}&\zero{9}{1}\\
\id{3} \otimes (\tb_i^T) & \id{3} - \ra_i&-\ua{}_i
\end{matrix}
\right) \\
 \times
\left(
\begin{matrix}
vec(\rx)\\
\tx\\
\lambda
\end{matrix}
\right) =
\left(
\begin{matrix}
\zero{9}{1}\\
\zero{3}{1}
\end{matrix}
\right)
\end{multline}

The question is now: ``What is the condition for this system to have a 
unique solution?'' and a subsequent one is: ``What occurs when this
condition is not fulfilled?''

\section{Algebraic analysis}
\label{analysis}
From earlier work on hand-eye calibration~\citep{tsai89,chen91c}, we
know that two motions with non-parallel rotation axes are sufficient to
determine the hand-eye transformation. 
We will show in this section, that our new linear solution owns the 
same sufficient condition but also allows us to identify
what can be obtained when such a sufficient condition is not fulfilled
(the so-called {\em partial calibration}).

Hence, let us determine what can be obtained using various
combinations of end-effector motions by successively considering:
pure translations, pure rotations, planar motions (i.e. containing the
same rotational axis and independent translations) and finally
general motions.
The results of this study are gathered up in Table~\ref{resume}.
Notice that by inverting the roles of the end-effector and the camera, we
obtain the same results for the recovery of the {\em eye-hand}
transformation (i.e. the inverse of the hand-eye transformation).

\begin{table*}[t!]
\caption{\label{resume}  Summary of the results for two independent motions.}
\centerline{
\begin{tabular}{|c|c|c|c|}
\hline
\begin{tabular}{lr}
&Motion 1\\
&\\
Motion 2&
\end{tabular}
&
\begin{tabular}{c}
Translation\\$\matx{R}_B = \matx{I}$\\$\vect{t}_B \neq 0$\\
\end{tabular}
& 
\begin{tabular}{c}
Rotation\\ $\matx{R}_B \neq \matx{I}$\\ $\vect{t}_B = 0$\\
\end{tabular}
&
\begin{tabular}{c}
General motion \\ $\matx{R}_B \neq \matx{I}$\\ $\vect{t}_B \neq 0$\\
\end{tabular}\\
\hline
\begin{tabular}{c}
Translation\\ $\matx{R}_B = \matx{I}$\\ $\vect{t}_B \neq 0$
\end{tabular}
& 
\begin{tabular}{c}
$\rx,\lambda$
\end{tabular}
&
\begin{tabular}{c}
$\rx,\lambda$\\$\tx(\alpha)$
\end{tabular}
&
\begin{tabular}{c}
$\rx,\lambda$\\$\tx(\alpha)$
\end{tabular}\\
\hline
\begin{tabular}{c}
Rotation\\ $\matx{R}_B \neq \matx{I}$\\ $\vect{t}_B = 0$
\end{tabular}
& 
\begin{tabular}{c}
$\rx,\lambda$\\$\tx(\alpha)$
\end{tabular}
&
\begin{tabular}{c}
$\rx, \tx(\lambda)$\\ \tiny Decoupled\\ \tiny solution
\end{tabular}
&
\begin{tabular}{c}
$\rx, \tx, \lambda$\\ \tiny General \\ \tiny solution
\end{tabular}\\
\hline
\begin{tabular}{c}
General motion\\ $\matx{R}_B \neq \matx{I}$\\ $\vect{t}_B \neq 0$
\end{tabular}
&
\begin{tabular}{c}
$\rx, \lambda$\\$\tx(\alpha)$
\end{tabular}
&
\begin{tabular}{c}
$\rx, \tx, \lambda$\\ \tiny General\\ \tiny solution
\end{tabular}
&
\begin{tabular}{c}
$\rx, \tx, \lambda$\\ \tiny General\\ \tiny solution
\end{tabular}\\
\hline
\end{tabular}
}
\end{table*}

\subsection{Pure translations} 

Recall from equation~(\ref{tsai1}), that when end-effector
motions are pure translations (i.e. $\rb_i=\id{3}$), then camera
motions are pure translations too (i.e. $\ra_i=\id{3}$). 
Hence, equation~(\ref{tsai2}) becomes
\begin{equation}
\ta_i = \rx \tb_i
\label{ta=rtb}
\end{equation}
Consequently, the amplitude of camera motion is the same as the
amplitude of end-effector motion, which is not the case when rotations 
are involved. One can therefore keep control of the camera
displacements and guarantee that a small end-effector motion will not
generate an unexpected large camera motion.
Concerning calibration, we have the following result:

\begin{proposition}
\label{prop_trans}
Three independent pure translations yield a linear estimation of
hand-eye rotation $\rx$ and of the unknown
scale factor $\lambda$. Hand-eye translation can not be observed.
\end{proposition}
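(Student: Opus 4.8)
The plan is to substitute the pure-translation hypothesis directly into the homogeneous system~(\ref{syst}) and read off what survives. Since $\rb_i=\id{3}$ forces $\ra_i=\id{3}$ by~(\ref{tsai1}), one has $\id{9}-\ra_i\otimes\rb_i=\id{9}-\id{3}\otimes\id{3}=\zero{9}{9}$ and $\id{3}-\ra_i=\zero{3}{3}$. Hence the first nine rows of~(\ref{syst}) vanish identically --- pure translations carry \emph{no} information through the rotation equation --- and the block that multiplies $\tx$ is zero as well, so the only equations left are
\begin{equation}
(\id{3}\otimes\tb_i^{T})\,vec(\rx)-\lambda\,\ua{}_i=0 ,
\label{propeq1}
\end{equation}
which by the Kronecker identity~(\ref{fond}) is just $\rx\tb_i=\lambda\ua{}_i$, i.e.\ (\ref{ta=rtb}) together with~(\ref{ua}). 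Because $\tx$ does not occur in~(\ref{propeq1}) at all, no collection of pure translations can constrain it; this already proves the last sentence of the statement.

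To recover $\rx$ and $\lambda$ I would stack three translations into $\matx{T}_B=(\tb_1\ \tb_2\ \tb_3)$ and $\matx{U}_A=(\ua{}_1\ \ua{}_2\ \ua{}_3)$, so that the three copies of~(\ref{propeq1}) read $\rx\matx{T}_B=\lambda\matx{U}_A$, i.e.\ a homogeneous linear system $\matx{C}\,(vec(\rx)^{T},\lambda)^{T}=0$ with $\matx{C}$ of size $9\times 10$ and linear in the $10$ unknowns $vec(\rx),\lambda$. When the three translations are independent $\matx{T}_B$ is invertible (with only two of them it has rank at most $2$ and $\rx$ stays underdetermined), and any null vector $(vec(\matx{R}')^{T},\lambda')^{T}$ then satisfies $\matx{R}'\matx{T}_B=\lambda'\matx{U}_A=\tfrac{\lambda'}{\lambda}\rx\matx{T}_B$, hence $\matx{R}'=\tfrac{\lambda'}{\lambda}\rx$: the null space of $\matx{C}$ is exactly one-dimensional. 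Its generator is obtained linearly --- e.g.\ as the smallest right singular vector of $\matx{C}$ --- giving $vec(\rx)$ and $\lambda$ up to one common scalar. Finally, since $\ua{}_i=\tfrac{1}{\lambda}\rx\tb_i$, the matrix $\matx{U}_A\matx{T}_B^{-1}=\tfrac{1}{\lambda}\rx$ computed from the data is $\tfrac{1}{\lambda}$ times a rotation, so rescaling the solution to make the columns of $\rx$ have unit norm --- the linearized orthogonality constraint of Section~\ref{new} --- recovers $\rx$ and, with it, $\lambda>0$ (positive because the first motion was taken with $\lambda=\|\ta{}_1\|\neq 0$). That is the claimed linear estimation of $(\rx,\lambda)$.

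The only delicate step is the dimension count: one must check that the null space of $\matx{C}$ is \emph{exactly} one-dimensional and not larger, which relies on both the independence of the $\tb_i$ (so that $\matx{T}_B^{-1}$ exists) and on $\lambda\neq 0$; granted these, the computation above pins it down. Everything else is substitution into~(\ref{syst}) and bookkeeping with the $vec$ and $\otimes$ identities already recalled in Section~\ref{new}.
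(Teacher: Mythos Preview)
Your argument is correct and follows the same overall strategy as the paper: substitute $\ra_i=\rb_i=\id{3}$ into~(\ref{syst}), observe that $\tx$ drops out, and solve the surviving linear relation $\rx\tb_i=\lambda\ua_i$ for $(\rx,\lambda)$ jointly, fixing the overall scale via the orthogonality of $\rx$.

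The execution differs in one respect worth noting. The paper temporarily treats $\lambda$ as known, writes the $9\times 9$ system $\matx{M}\,vec(\rx)=\lambda(\ua_1^T,\ua_2^T,\ua_3^T)^T$, and inverts $\matx{M}$ \emph{explicitly} via cross-products to obtain the closed form
\[
\trx=\frac{\lambda}{\Delta}\big(\ua_1(\tb_2\times\tb_3)^T+\ua_2(\tb_3\times\tb_1)^T+\ua_3(\tb_1\times\tb_2)^T\big),
\]
then checks algebraically that this equals $\lambda$ times a rotation, so that the determinant condition $\det(\trx)=1$ yields $\lambda$. You instead keep $\lambda$ as an unknown from the start, view the equations as a $9\times 10$ homogeneous system, and argue directly that its null space is one-dimensional because $\matx{T}_B$ is invertible. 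Your route is slightly more streamlined---no temporary assumption on $\lambda$, no explicit inverse---while the paper's route buys an explicit closed-form expression for the estimator, which is convenient for analysis and implementation. Both normalizations (your unit-column-norm, the paper's unit-determinant) are equivalent here since the null-space generator is a scalar multiple of a rotation.
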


\begin{proof}
In the case of pure translations, the upper part of the
system in~(\ref{syst}) vanishes and its lower part
simplifies into:
\begin{equation}
\left( \id{3} \otimes (\tb_i^T) \right) vec({\rx}) = \lambda \ua_i
\end{equation}

This implies that hand-eye translation $\tx$ can not be estimated.
However, the nine coefficients of the hand-eye rotation $\rx$ can be
obtained as we show below. This was also demonstrated
in~\citep{zhuang98b} in the particular case where $\lambda$ is known.

Let us assume temporarily that $\lambda$ is known.
If $\tb_i \neq 0$, then $\id{3} \otimes (\tb_i^T)$ has rank~3 since
\begin{equation} 
\id{3} \otimes (\tb_i^T) = \left(
\begin{matrix}
\tb_i^T& \zero{1}{3} & \zero{1}{3}\\ 
 \zero{1}{3}& \tb_i^T& \zero{1}{3}\\
\zero{1}{3} & \zero{1}{3} & \tb_i^T
\end{matrix} \right)
\end{equation}
Consequently, three linearly independent pure translations yield a full rank
$(9 \times 9)$ system:
\begin{equation}
\underbrace{\left(
\begin{matrix}
\id{3} \otimes (\tb_1^T)\\
\id{3} \otimes (\tb_2^T)\\
\id{3} \otimes (\tb_3^T)
\end{matrix}
\right)}_{\matx{M}}
vec(\rx)=
\lambda \left(
\begin{matrix}
\ua_1\\ \ua_2\\ \ua_3
\end{matrix}
\right)
\end{equation}
of which the solution $\trx$ is such that
\begin{equation}
vec(\trx)=\lambda \matx{M}^{-1}  \left(
\begin{matrix}
\ua_1\\ \ua_2\\ \ua_3
\end{matrix}
\right)
\label{sol_trans}
\end{equation}
Since
$(\matx{A}\otimes\matx{B})(\matx{C}\otimes\matx{D})=\matx{AC}\otimes\matx{BD}$~\citep{Bellman60}, 
it is easy to verify that the analytic form of the inverse of
$\matx{M}$ is:
\begin{multline}
\matx{M}^{-1} = \frac{1}{\Delta}
\big(
\id{3} \otimes (\tb_2 \times \tb_3) \\
\id{3} \otimes (\tb_3 \times \tb_1) \quad
\id{3} \otimes (\tb_1 \times \tb_2)
\big)
\end{multline}
where $\times$ denotes the cross-product and
$\Delta=\det(\tb_1,\tb_2,\tb_3)$.
This allows the
rewriting of (\ref{sol_trans}) in closed form:
\begin{multline}
vec(\trx) = \frac{\lambda}{\Delta}
\big(
\id{3} \otimes (\tb_2 \times \tb_3) \ua_1 + \\
\id{3} \otimes (\tb_3 \times \tb_1) \ua_2 +
\id{3} \otimes (\tb_1 \times \tb_2) \ua_3 
\big)
\end{multline}
Applying (\ref{fond}) yields
\begin{multline}
vec(\trx) = \frac{\lambda}{\Delta}
vec \big(
\ua_1 (\tb_2 \times \tb_3)^T+ \\
\ua_2 (\tb_3 \times \tb_1)^T+
\ua_3 (\tb_1 \times \tb_2)^T
\big)
\end{multline}
and from the linearity of the $vec$ operator, we finally obtain:
\begin{multline}
\trx = \frac{\lambda}{\Delta}
\big(
\ua_1 (\tb_2 \times \tb_3)^T+ \\
\ua_2 (\tb_3 \times \tb_1)^T+
\ua_3 (\tb_1 \times \tb_2)^T
\big)
\label{rx_trans}
\end{multline}

Let us analyze this result and prove now that $\trx$ is
equal to $\rx$, when measurements are exact. To do that,
first recall that $\rx \tb_i = \lambda \ua_i$. Hence,
\begin{multline}
\label{nada}
\trx  = \frac{1}{\Delta} \rx \\
\times \underbrace{\left(
\tb_1 (\tb_2 \times \tb_3)^T+
\tb_2 (\tb_3 \times \tb_1)^T+
\tb_3 (\tb_1 \times \tb_2)^T
\right)}_{\matx{N}} 
\end{multline}
Recalling that $\rx$ is orthogonal and verifying that
$\matx{N}=\Delta \id{3}$, we obtain that $\trx = \rx$.

This analysis proves that even if $\lambda$ is unknown the columns of
$\trx$, estimated from~(\ref{rx_trans}), are orthogonal to each other.
Thus, only the unity constraint (i.e. $\det(\trx)=1$) remains to be
verified by $\trx$. From~(\ref{rx_trans}) again, the unity constraint
immediately gives $\lambda$. Consequently, the hand-eye rotation can
be recovered from three linearly independent translations.
\end{proof}

\begin{proposition}
\label{deux_trans}
A minimum of 2 linearly independent pure translations are intrinsically 
enough to estimate the hand-eye rotation $\rx$ and the unknown scale
factor $\lambda$.
\end{proposition}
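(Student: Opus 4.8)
The plan is to reduce the two-translation situation to Proposition~\ref{prop_trans} by synthesising a third, purely virtual, translation from the two measured ones. First I would recall that, specialised to pure translations, the linear system~(\ref{syst}) is nothing but $\rx\tb_i=\lambda\ua_i$ for each motion $i$, which is exactly~(\ref{ta=rtb}) combined with~(\ref{ua}). Taking Euclidean norms and using that $\rx$ is orthogonal gives $\|\tb_i\|=\lambda\|\ua_i\|$; with the normalisation $\|\ua_1\|=1$ fixed in Section~\ref{classic} this already yields $\lambda=\|\tb_1\|$ from a single translation of non-zero amplitude. So the scale factor is essentially free, and the real content of the statement concerns the rotation.

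Next I would recover $\rx$ from the two linearly independent translations $\tb_1,\tb_2$ and their associated vectors $\ua_1,\ua_2$. Having obtained $\lambda$, the two camera translations $\ta_i:=\lambda\ua_i=\rx\tb_i$ $(i=1,2)$ are known exactly. Since $\rx\in SO(3)$, the cross-product identity $\rx(\vect{u}\times\vect{v})=(\rx\vect{u})\times(\rx\vect{v})$ holds, so setting $\tb_3:=\tb_1\times\tb_2$ and $\ta_3:=\ta_1\times\ta_2$ produces a third exact correspondence $\ta_3=\rx\tb_3$. Because $\tb_1,\tb_2$ are linearly independent, $\tb_3\neq 0$ and $\Delta:=\det(\tb_1,\tb_2,\tb_3)=\|\tb_1\times\tb_2\|^2\neq 0$, so $\tb_1,\tb_2,\tb_3$ form a triple of linearly independent pure translations whose associated camera translations are all known. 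Proposition~\ref{prop_trans}---in particular the closed form~(\ref{rx_trans}) and the orthogonality argument that follows it---then applies verbatim and returns $\rx$ exactly, while the unity constraint re-confirms $\lambda$. Equivalently, and more geometrically: the two correspondences fix $\rx$ on the plane spanned by $\tb_1$ and $\tb_2$ by linearity, and on its one-dimensional orthogonal complement $\rx$ is then determined up to a sign that $\det\rx=+1$ resolves.

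The point to be careful about is orientation: the cross-product identity that manufactures the virtual translation requires $\det\rx=1$, not merely $\rx^T\rx=\id{3}$---an element of the orthogonal group with determinant $-1$ would reverse $\tb_1\times\tb_2$ and the construction would break. This is also why the statement claims only that two translations are \emph{intrinsically} enough: the linear system~(\ref{syst}) restricted to two pure translations is rank-deficient (six scalar equations for the ten coefficients of $vec(\rx)$ and $\lambda$), so two translations do not suffice for the \emph{linear} problem of Proposition~\ref{prop_trans}; it is only after re-injecting the non-linear facts that $\rx$ is orthogonal and orientation-preserving---equivalently, after adjoining the virtual third motion---that the degree-of-freedom count closes, a rotation plus a scale carrying only four unknowns against the six equations encoded in $\rx\tb_1=\lambda\ua_1$ and $\rx\tb_2=\lambda\ua_2$.
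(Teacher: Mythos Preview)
Your argument is correct and follows essentially the same route as the paper: first recover $\lambda$ from the norm-preservation identity $\lambda\|\ua_i\|=\|\tb_i\|$, then manufacture a third virtual translation via the cross-product identity $\rx(\tb_1\times\tb_2)=(\rx\tb_1)\times(\rx\tb_2)$ and invoke Proposition~\ref{prop_trans} on the resulting full-rank $9\times 9$ system. Your third paragraph, explaining why $\det\rx=+1$ is essential for the cross-product step and why the word ``intrinsically'' signals that the purely linear formulation of~(\ref{syst}) does not suffice here, is a useful gloss that the paper does not make explicit.
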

\begin{proof}
The solution is not linear any more and comes in two steps.
\begin{enumerate}
\item Scale factor estimation

As $\rx$ is orthogonal, it preserves the norm. Hence, for each pure
translation $i$, we have:
\[
\|\rx\tb_i\|=\|\tb_i\| 
\] 
Applying~(\ref{ta=rtb}) and (\ref{ua}) on the left-hand side of this
expression gives for all $i$:
\[
\lambda \| \ua_i \| = \| \tb_i\|
\]
where $\ua_i$ and $\tb_i$ are known.

\item Hand-eye rotation estimation

Remark that if $\tb_1$ and $\tb_2$ are two linearly independent
vectors, then $\tb_1 \times \tb_2$ is linearly independent from them.
Moreover, one can prove that
\[
\rx (\tb_1 \times \tb_2) = (\rx \tb_1) \times (\rx \tb_2)
\]
Therefore, we can form the following full-rank $(9 \times 9)$ system:
\begin{multline}
\left(
\begin{matrix}
\id{3} \otimes (\tb_1^T)\\
\id{3} \otimes (\tb_2^T)\\
\id{3} \otimes ((\tb_1 \times \tb_2)^T)
\end{matrix}
\right) \\
\times vec(\rx)=
\lambda \left(
\begin{matrix}
\ua_1\\ \ua_2\\ \lambda (\ua_1 \times \ua_2)
\end{matrix}
\right)
\end{multline}
Since $\lambda$ is now known, $\rx$ can be obtained by inverting this
system and the orthogonality of the solution is guaranteed by the
proof of Proposition~\ref{prop_trans}.
\end{enumerate}
\end{proof}

\subsection{Pure rotations} 
By ``pure rotations'', we mean motions of the end-effector such that
$\tb_i = 0$. In practice, these motions can be realized by most of the 
robotic arms, since the latter are usually designed in such a manner that
their end-effector reference frame is centered on a wrist (i.e. the
intersection of the last three revolute joint axes). For
similar reasons, pan-tilt systems may also benefit from the subsequent
analysis.

In such a case, we can state the following proposition:
\begin{proposition}
\label{prop_rot}
If the robot end-effector undergoes $n \ge 2$ pure rotations, 2 of
which having non-parallel axes, then one can linearly estimate the
hand-eye rotation $\rx$ and the hand-eye translation up to the unknown
scale factor $\tx/\lambda$. These two estimations are decoupled.
\end{proposition}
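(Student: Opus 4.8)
The plan is to start from the linear system~(\ref{syst}) and specialize it to the case of pure rotations, where $\tb_i = 0$ for all motions. Substituting this into the block $\id{3} \otimes (\tb_i^T)$ kills that term, so the lower block of the system decouples from $vec(\rx)$ entirely: the rotation part is governed solely by the upper block $(\id{9} - \ra_i \otimes \rb_i)vec(\rx) = 0$ — equivalently~(\ref{kron.rot}) — while the translation part becomes $(\id{3} - \ra_i)\tx - \lambda\ua_i = 0$, i.e. $(\id{3} - \ra_i)(\tx/\lambda) = \ua_i$. This immediately explains the word ``decoupled'' in the statement: once $\rx$ is found from the first equation, $\tx/\lambda$ is found independently from the second, and conversely neither estimation feeds the other.

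Next I would handle the rotation estimation. The claim is that $n \ge 2$ pure rotations with at least two non-parallel axes pin down $\rx$ uniquely (up to the usual normalization already discussed in Proposition~\ref{prop_trans}). The key point is that $vec(\rx)$ must lie in the common null space of the matrices $\id{9} - \ra_i \otimes \rb_i$. I would argue that for a single rotation this null space has dimension greater than~$1$ (it contains $vec(\rx)$ but also other matrices that intertwine $\ra_i$ and $\rb_i$), but that imposing a second constraint with a rotation axis not parallel to the first cuts the intersection down to the one-dimensional span of $vec(\rx)$; this is precisely the classical sufficient-condition argument of~\citep{tsai89,chen91c}, now transcribed into the Kronecker-product setting. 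Concretely, one stacks the blocks $\id{9} - \ra_i \otimes \rb_i$ for all $i$, computes a basis of the null space, and recovers $\rx$ as the unique (up to sign/scale) element of that null space, then fixes the scale by the orthonormality (det $=1$) condition exactly as in the proof of Proposition~\ref{prop_trans}.

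For the translation part, I would observe that each pure rotation gives the linear equation $(\id{3} - \ra_i)(\tx/\lambda) = \ua_i$ in the three unknowns forming $\tx/\lambda$. A single rotation is insufficient because $\id{3} - \ra_i$ is rank-deficient (it annihilates the rotation axis of $\ra_i$). But two rotations with non-parallel axes stack to a $(6 \times 3)$ system $\bigl(\begin{smallmatrix}\id{3}-\ra_1\\ \id{3}-\ra_2\end{smallmatrix}\bigr)(\tx/\lambda) = \bigl(\begin{smallmatrix}\ua_1\\ \ua_2\end{smallmatrix}\bigr)$ of full column rank~3, since the two axes span a $2$-plane and together the two matrices have trivial common null space. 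Hence $\tx/\lambda$ is obtained by least squares (or directly by the pseudo-inverse), which is linear in the measured data. I would remark that $\lambda$ itself cannot be disentangled from $\tx$ here — with $\tb_i = 0$ the scale factor never appears except multiplying $\tx/\lambda$ — which is why only the ratio is recoverable, consistent with the ``$\rx, \tx(\lambda)$'' entry in Table~\ref{resume}.

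The main obstacle I anticipate is the rotation-uniqueness step: showing rigorously that the common null space of the $\id{9} - \ra_i \otimes \rb_i$ blocks collapses to a single line under the non-parallel-axes hypothesis. One has to understand the eigenstructure of $\ra_i \otimes \rb_i$ — its eigenvalues are products of the eigenvalues of $\ra_i$ and $\rb_i$, hence the eigenvalue~$1$ has a multi-dimensional eigenspace for a generic single rotation — and then verify that two generic such eigenspaces meet only in $\mathrm{span}\{vec(\rx)\}$. This is where I would lean hardest on the earlier algebraic machinery and on the cited classical results, rather than re-deriving everything from scratch.
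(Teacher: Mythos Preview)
Your proposal is correct and follows essentially the same route as the paper: specialize~(\ref{syst}) to $\tb_i=0$ to obtain the block-diagonal decoupling into~(\ref{pour_svd}) and~(\ref{pour_mc}), solve the stacked $(\id{3}-\ra_i)$ system for $\tx/\lambda$ by least squares using the non-parallel-axes hypothesis for full rank, and recover $\rx$ from the one-dimensional null space of the stacked $(\id{9}-\ra_i\otimes\rb_i)$ blocks via the determinant normalization. The paper likewise isolates the rank-8/one-dimensional-kernel claim as the hard step and packages it as Lemma~\ref{la_solution}, proved in the appendix through the eigenstructure of $\ra_i\otimes\rb_i$ and a commutation argument---precisely the ``algebraic machinery'' you anticipate leaning on.
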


Notice that, in the case where camera motion is obtained
through pose computation, $\lambda$ is known and the hand-eye
translation can thus be fully recovered, as does Li~\citep{li98}.

\begin{proof}
With pure rotations, the system in~(\ref{syst}) is block-diagonal and decouples into:
\begin{eqnarray}
\left( \id{9} -\ra_i \otimes \rb_i \right)vec(\rx) &=& \zero{9}{1}
\label{pour_svd}, i=1..n\\
\left(  \id{3} - \ra_i \right) \tx &=& \lambda \ua_i \label{pour_mc}, i=1..n
\end{eqnarray}

Let us first study eq.~(\ref{pour_mc}).
Consider $n \ge 2$ rotations, such that, without loss of generality,
$\ra_1$ and $\ra_2$ have non parallel axes and note $\tx=\lambda
\tx_0$.
Then,  from (\ref{pour_mc}), we form the following
system:
\begin{equation}
\begin{pmatrix}
\id{3} - \ra_1\\
\vdots\\
\id{3} - \ra_n\\
\end{pmatrix}
\lambda \tx_0
= 
\lambda 
\begin{pmatrix}
\ua_1\\ \vdots \\ \ua_n
\end{pmatrix}
\end{equation}
This system has a 1-dimensional solution subspace. Indeed,
$\tx_0$ is solution to the following system: 
\begin{equation}
\begin{pmatrix}
\id{3} - \ra_1\\
\vdots\\
\id{3} - \ra_n\\
\end{pmatrix}
\tx_0
= 
\begin{pmatrix}
\ua_1\\ \vdots \\ \ua_n
\end{pmatrix}
\end{equation}
which has full rank because $\ra_1$ and $\ra_2$ have non parallel
axes. 

Notice that if $n>2$, $\tx_0$ is the least squares solution of
this system.
Notice also that the parameter of the subspace is
the unknown scale factor. This is not surprising since pure rotations
of the robot do not contain metric information.

Let us now study the first subsystem (\ref{pour_svd}). One of the properties of the
Kronecker product is that the eigenvalues of $\matx{M} \otimes
\matx{N}$ are the product of the eigenvalues of $\matx{M}$ by those of
$\matx{N}$. In our case, $\ra_i$ and $\rb_i$ have the same
eigenvalues: \mbox{$\{ 1, e^{i \theta_i}, e^{-i \theta_i} \}$} and thus the
eigenvalues of \mbox{$\ra_i \otimes \rb_i$} are: $\{ 1,1,1,e^{i \theta_i},
e^{i \theta_i}, e^{-i \theta_i}, e^{-i \theta_i}, e^{2i \theta_i},
e^{-2i \theta_i}\}$. 

Consequently, when the angle of rotation
$\theta_i$ is not a multiple of $\pi$, then the $(9 \times 9)$ matrix
of~(\ref{pour_svd})
$\id{9} -\ra_i \otimes \rb_i$ has rank~6. Hence, the solution $\rx$ lies in a
3-dimensional manifold. Using the orthogonality constraints of $\rx$, the
solution manifold dimension can only be reduced to 1. This dimension
corresponds to the common eigenvalue 1 of $\ra_i$ and $\rb_1$. This confirms
the need for two rotations.

In the case of two or more independent rotations, we can state the
following lemma:
\begin{lemma}
\label{la_solution}
If the robot end-effector undergoes $n \ge 2$ pure rotations, 2 of
which having non parallel axes,
then system (\ref{pour_svd}) has rank 8, its null space $\mathcal{K}$ is
1-dimensional and the hand-eye rotation $\rx$ is equal to:
\begin{equation}
\rx =
\frac{sign(\det(\matx{V}))}{|\det(\matx{V})|^{\frac{1}{3}}}\matx{V}
\end{equation}
where $sign()$ returns the sign of its argument,
$\matx{V}=vec^{-1}(\vect{v})$ and $\vect{v}$ is any vector of the
null space $\mathcal{K}$.
\end{lemma}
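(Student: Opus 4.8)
The plan is to combine the rank computation that precedes the lemma with the orthogonality structure of the solution space. First I would note that the preceding discussion has already established that for each rotation $i$ with $\theta_i$ not a multiple of $\pi$, the matrix $\id{9} - \ra_i \otimes \rb_i$ has rank $6$, and that the common eigenvalue $1$ of $\ra_i$ and $\rb_i$ is responsible for a piece of the null space. The key algebraic fact I would make precise is that, since $\ra_i \rx = \rx \rb_i$, we have $(\ra_i \otimes \rb_i)\,vec(\rx) = vec(\rx)$ by the fundamental identity~(\ref{fond}), so $vec(\rx)$ itself lies in every individual null space, hence in the null space $\mathcal{K}$ of the stacked system~(\ref{pour_svd}). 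The real content is that stacking two rotations with non-parallel axes cuts the common $3$-dimensional null space (over $\id{9}-\ra_i\otimes\rb_i$) down to exactly the line spanned by $vec(\rx)$: the eigenspace of $\ra_1\otimes\rb_1$ for eigenvalue $1$ is spanned by $\{vec(\rx), vec(\ra_1\rx), vec(\rx\rb_1^{\mathrm T})\}$ — equivalently by $vec(\rx), vec(\ra_1\rx)$ and a third vector built from the rotation axis of $\ra_1$ — and imposing the second equation forces any common null vector to commute appropriately with $\ra_2$ as well, which, because the axes of $\ra_1$ and $\ra_2$ are non-parallel, leaves only the multiples of $vec(\rx)$. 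This gives $\dim\mathcal{K}=1$, i.e. system~(\ref{pour_svd}) has rank $8$ (here by ``rank'' one means the dimension of the row space of the stacked matrix acting on the $9$-dimensional space, so rank $8$ is the complement of the $1$-dimensional kernel).

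Once $\dim\mathcal{K}=1$ is in hand, the rest is bookkeeping with the scaling. Any $\vect{v}\in\mathcal{K}$ is of the form $\vect{v}=\mu\, vec(\rx)$ for some nonzero scalar $\mu\in\mathbb{R}$, so $\matx{V}=vec^{-1}(\vect{v})=\mu\rx$. Then $\det(\matx{V})=\mu^3\det(\rx)=\mu^3$ because $\rx\in SO(3)$, so $|\det(\matx{V})|^{1/3}=|\mu|$ and $sign(\det(\matx{V}))=sign(\mu)$. Hence
\[
\frac{sign(\det(\matx{V}))}{|\det(\matx{V})|^{1/3}}\,\matx{V}
= \frac{sign(\mu)}{|\mu|}\,\mu\rx
= \rx,
\]
which is exactly the claimed formula; I would also remark that this normalization is well-defined (independent of the choice of $\vect{v}\in\mathcal{K}$ and insensitive to its sign), since replacing $\mu$ by $-\mu$ flips both $sign(\mu)$ and the sign of $\matx{V}$. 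In a numerical setting $\vect{v}$ would be taken as the right singular vector of the stacked matrix associated with the smallest singular value, and the same normalization recovers the nearest rotation.

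The main obstacle is the dimension-counting argument in the first paragraph: showing that the intersection of the eigenspaces (for eigenvalue $1$) of $\ra_1\otimes\rb_1$ and $\ra_2\otimes\rb_2$ is exactly one-dimensional when the rotation axes are non-parallel. The eigenvalue list $\{1,1,1,e^{i\theta_i},\dots\}$ only tells us each such eigenspace is $3$-dimensional; pinning down that the two $3$-spaces meet in a line requires identifying those eigenspaces concretely (e.g. a matrix $\matx{W}$ satisfies $\ra_i\matx{W}=\matx{W}\rb_i$ iff $vec(\matx{W})$ is in the eigenspace, and the space of such $\matx{W}$ for a single $i$ is $3$-dimensional — spanned, say, by $\rx$, $\ra_i\rx$, and an axis-related term) and then using the non-parallel-axes hypothesis to intersect. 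This is precisely the algebraic content underlying the classical ``two non-parallel axes suffice'' result of~\citep{tsai89,chen91c}, so I would lean on that geometric fact, but it deserves to be spelled out rather than merely cited, since the whole point of the linear reformulation is to recover it cleanly.
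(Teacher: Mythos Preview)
Your normalization argument in the second paragraph is correct and coincides exactly with the paper's: once one knows $\matx{V}=\mu\rx$ for some nonzero real $\mu$, the formula follows from $\det(\matx{V})=\mu^3$.

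Where your proposal diverges from the paper is precisely the part you flag as the main obstacle, namely showing $\dim\mathcal{K}=1$. Your plan is to write down an explicit basis of the $3$-dimensional eigenspace of each $\ra_i\otimes\rb_i$ (your $\{vec(\rx),\,vec(\ra_i\rx),\,vec(\rx\rb_i^{T})\}$ is indeed such a basis when $\theta_i\notin\{0,\pi\}$) and then intersect two of them. That is viable, but your sentence ``forces any common null vector to commute appropriately with $\ra_2$'' is not quite right: the relation $\ra_2\matx{W}=\matx{W}\rb_2$ is an \emph{intertwining}, not a commutation, and this is exactly why the intersection step resists a one-line finish in your setup. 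The paper closes this gap with a single change of variables: it multiplies by $(\matx{I}\otimes\rx^{T})$, i.e.\ sets $\vect{v}'=(\matx{I}\otimes\rx^{T})\vect{v}$, which (via the same identity~(\ref{fond}) you invoke) turns each intertwining $\ra_i\matx{W}=\matx{W}\rb_i$ into a genuine commutation $\rb_i\matx{M}=\matx{M}\rb_i$ with $\matx{M}=vec^{-1}(\vect{v}')$. The problem then reduces to a clean algebraic fact proved as a preliminary result: a nonzero matrix commuting with two rotations having non-parallel axes must be a scalar multiple of $\id{3}$. Undoing the change of variables gives $\matx{V}=\mu\rx$.

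So your outline is sound and your eigenspace description is correct; what is missing is the conjugation step that converts intertwining to commutation. That step is both the paper's key device and the natural completion of your own argument: after conjugation your basis $\{\rx,\ra_i\rx,\rx\rb_i^{T}\}$ becomes $\{\id{3},\rb_i,\rb_i^{T}\}$, visibly the commutant of $\rb_i$, and the intersection over two non-coaxial $\rb_i$ is then the joint commutant, which is the scalars.
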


The proof of this Lemma is technical and, hence, given in Appendix~\ref{preuve}.
In this proof, we determine the null-space of the matrix:
\[
\left(
\begin{matrix}
\id{9} -\ra_1 \otimes \rb_1\\
\id{9} -\ra_2 \otimes \rb_2\\
\vdots\\
\id{9} -\ra_n \otimes \rb_n
\end{matrix}
\right)
\]
where, without loss of generality, $\ra_1$ and $\ra_2$ are the two
rotations with non parallel axes.
We show that, in the noise-free case, any matrix $\matx{V}$ extracted 
for the null space $\mathcal{K}$ is guaranteed to have perpendicular
rows, even if more than the minimal 2 rotations with non parallel axes
are used.
\end{proof}

In practice, $\vect{v}$ can be determined using a Singular Value
Decomposition (SVD) which is known to accurately estimate the null space
of a linear mapping. Nevertheless, noise yields a non orthogonal
matrix, which requires to be corrected by an orthogonalization
step\citep{horn88a}. However, even though it has not been quantified, the results we
obtained before orthogonalization are close to be orthogonal.

\begin{figure}[t]
\centerline{\resizebox{!}{3.5cm}{\includegraphics{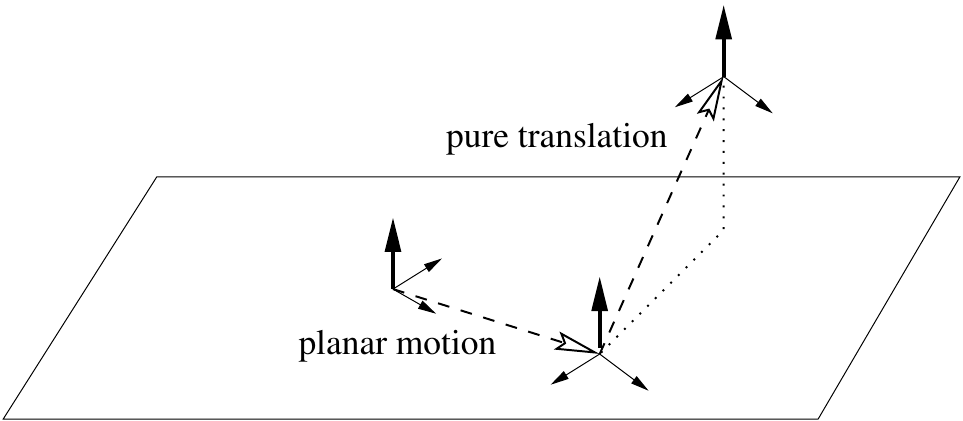}}}
\caption{One planar motion with non-identity rotation and one non-zero pure translation
which is not parallel to the rotation axis of the planar motion.}
\label{plan_trans} 
\end{figure}

\subsection{Planar motions} 
Some robots are restricted to move on a
plane, such as car-like robots.  In this case, all the robot and
camera rotations have the same axis $\vect{n}_b$
(resp. $\vect{n}_a=\rx \vect{n}_b$), which is orthogonal to the plane
of motion. Then, we can demonstrate that

\begin{lemma}
\label{plan+trans}
One planar motion with non-identity rotation and one non-zero pure translation
(which is not parallel to the rotation axis of the planar motion, see Fig.~\ref{plan_trans}) are
intrinsically enough to recover the hand-eye rotation $\rx$ and the unknown scale factor
$\lambda$. 
The hand-eye translation
can only be estimated up to an unknown height $\alpha$ along the
normal to the camera plane of motion (Fig.~\ref{car}).
\end{lemma}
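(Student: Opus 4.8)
The plan is to work with the system (\ref{syst}) restricted to the planar motion (call it motion~$1$) and the non-zero pure translation (motion~$2$), and to separate, as in the preceding propositions, the recovery of $(\rx,\lambda)$ from that of $\tx$. For motion~$2$ the rotation block of (\ref{syst}) is $\id{9}-\ra_2\otimes\rb_2=\zero{9}{9}$, so the only informative row is $\left(\id{3}\otimes(\tb_2^T)\right)vec(\rx)=\lambda\ua_2$; for motion~$1$, (\ref{ax=xb.rot}) gives $\ra_1=\rx\rb_1\rx^T$, so $\ra_1$ is the rotation of axis $\vect{n}_a=\rx\vect{n}_b$ and of the same angle $\theta$ as $\rb_1$. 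I would first recall, exactly as in the pure-rotation analysis of (\ref{pour_svd}), that when $\theta$ is not a multiple of $\pi$ the matrix $\id{9}-\ra_1\otimes\rb_1$ has rank~$6$ and hence a $3$-dimensional null space; by (\ref{fond}) this null space equals $\{vec(\matx{M}):\ \ra_1\matx{M}=\matx{M}\rb_1\}$, and since $\ra_1=\rx\rb_1\rx^T$ it consists exactly of the matrices $\matx{M}=\rx\matx{C}$ with $\matx{C}$ commuting with $\rb_1$, i.e. $\matx{C}=a\id{3}+b\,[\vect{n}_b]_\times+c\,\vect{n}_b\vect{n}_b^T$ for scalars $a,b,c$.

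Next I would adjoin the pure-translation row. Writing a candidate solution as $\matx{M}=\rx\matx{C}$ with candidate scale $\lambda'$, this row becomes $\rx\matx{C}\tb_2=\lambda'\ua_2$, which combined with the true identity $\rx\tb_2=\lambda\ua_2$ yields $\matx{C}\tb_2=(\lambda'/\lambda)\,\tb_2$: the vector $\tb_2$ must be a real eigenvector of $\matx{C}$. Now $\matx{C}$ acts on the plane orthogonal to $\vect{n}_b$ as a rotation-scaling of parameters $(a,b)$ and along $\vect{n}_b$ as the scalar $a+c$; its only real eigendirection is $\vect{n}_b$ itself, unless $b=0$, in which case the whole plane is an eigenspace. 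Since $\tb_2$ is not parallel to $\vect{n}_b$ this forces $b=0$, and then either $c=0$ (when $\tb_2$ is not in the plane) or $c$ remains free (when $\tb_2$ lies in the plane). Imposing that $\matx{M}=\rx\matx{C}$ be a genuine rotation --- the constraint $\matx{C}^T\matx{C}=\id{3}$, together with $\det\matx{M}=1$ and the sign convention $\lambda>0$ inherited from $\lambda=\|\ta_1\|$ --- then forces $\matx{C}=\id{3}$, hence $\matx{M}=\rx$ and $\lambda'=\lambda$. So, exactly as in Proposition~\ref{deux_trans}, the two motions are intrinsically enough to recover $\rx$ and $\lambda$, through this final nonlinear normalization rather than by a bare linear inversion.

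Finally, with $\rx$ and $\lambda$ now fixed I would determine what constrains $\tx$. Motion~$2$ contributes only $(\id{3}-\ra_2)\tx=\zero{3}{1}$, i.e. nothing --- in agreement with the fact that pure translations carry no metric information on the hand-eye translation --- so the sole remaining equation is the lower block of motion~$1$, which after substituting the recovered $\rx$ reads $(\id{3}-\ra_1)\tx=\lambda\ua_1-\rx\tb_1$. As $\ra_1$ is a rotation about $\vect{n}_a$, $\id{3}-\ra_1$ has rank~$2$ and its kernel is spanned by $\vect{n}_a$; its right-hand side is orthogonal to $\vect{n}_a$, since $\ua_1$ is parallel to the camera translation $\ta_1$, which lies in the camera plane of motion, while $\rx\tb_1\perp\rx\vect{n}_b=\vect{n}_a$. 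Hence the equation is consistent and pins $\tx$ down up to an additive term $\alpha\,\vect{n}_a$, i.e. up to an unknown displacement along the normal $\vect{n}_a$ to the camera plane of motion --- which is what the lemma claims.

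The step I expect to be the main obstacle is the second one: ruling out spurious orthonormal elements of the null space. When $\tb_2$ happens to lie in the plane of the planar motion, the coefficient $c$ above is a priori unconstrained, and a bad candidate such as $\rx\,\matx{R}(\vect{n}_b,\pi)$ then satisfies all the linear equations and is itself a rotation matrix; discarding it is precisely what the sign convention on $\lambda$ (equivalently, the orientation of the first camera translation) achieves. The degenerate sub-case $\theta=\pi$, in which $\id{9}-\ra_1\otimes\rb_1$ no longer has rank~$6$, must likewise be excluded or handled separately, just as it is for the pure-rotation case.
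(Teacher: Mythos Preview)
Your argument is correct but follows a genuinely different route from the paper's. The paper (which, incidentally, labels the pure translation as motion~$1$ and the planar motion as motion~$2$) proceeds constructively in three steps: (i) it recovers $\lambda$ directly from norm preservation, $\lambda\|\ua_i\|=\|\tb_i\|$, using the pure translation alone, exactly as in Proposition~\ref{deux_trans}; (ii) it then \emph{extracts the rotation axes} $\vect{n}_a,\vect{n}_b$ from the planar motion and, using the relation $\rx\vect{n}_b=\vect{n}_a$ together with $\rx\tb_1=\lambda\ua_1$ and their cross-product, builds a full-rank $9\times 9$ linear system for $vec(\rx)$, again mirroring Proposition~\ref{deux_trans}; (iii) it finishes with the same rank-$2$ argument on $\id{3}-\ra_2$ that you give for $\tx$.

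Your approach instead stays entirely inside the Kronecker formulation~(\ref{syst}): you parametrize the $3$-dimensional null space of $\id{9}-\ra\otimes\rb$ as $\rx$ times the commutant of $\rb$, intersect it with the pure-translation row, and then use orthonormality together with the positivity convention on $\lambda$ to eliminate the spurious candidate $\rx\,\matx{R}(\vect{n}_b,\pi)$. This is closer in spirit to Lemma~\ref{la_solution} and arguably more faithful to the paper's stated goal of avoiding axis/angle representations --- the paper's own proof quietly computes $\vect{n}_a$ and $\vect{n}_b$. What you pay for this is the explicit case split on whether the translation lies in the plane of motion, plus the need to invoke the sign of $\lambda$ to discard the $\pi$-rotation impostor; the paper's construction sidesteps both. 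Your handling of the translation step and of the $\theta=\pi$ degeneracy is essentially the same as the paper's (which simply does not discuss the latter).
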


\begin{figure}[t]
\centerline{\resizebox{!}{3.5cm}{\includegraphics{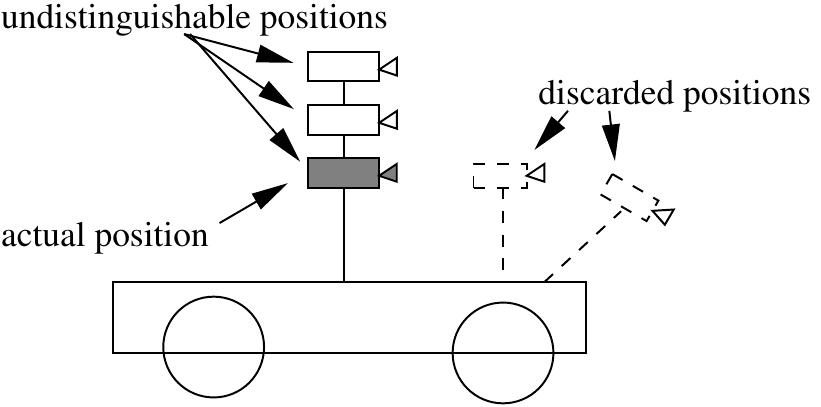}}}
\caption{In the case of planar motions, one can not
determine the altitude of a camera which is rigidly mounted onto the base.}
\label{car} 
\end{figure}

Notice that this Lemma is not limited to the planar motion case, since the
pure translation is not restricted to lie in the plane of motion.

\begin{proof}
Assume without loss of generality that the first motion is
a pure translation ($\ra_1=\rb_1=\id{3}$, $\tb_1 \in \Re^3$) and the
second is a planar motion with non-identity rotation
such that its rotation axis $\vect{n}_b$ is not
parallel to $\tb_1$ (Fig.~\ref{plan_trans}).  
Then, the general system~(\ref{syst}) rewrites as:
\begin{align}
\label{plan+trans:eq}
\left(
\begin{matrix}
\id{3} \otimes (\tb_1^T) & \zero{3}{3} &-\ua_1\\
\id{9} -\ra_2 \otimes \rb_2 &
\zero{9}{3}&\zero{9}{1}\\
\id{3} \otimes (\tb_2^T) & \id{3} - \ra_2&-\ua_2
\end{matrix}
\right) 
\\ \times
\left(
\begin{matrix}
vec(\rx)\\
\tx\\ \lambda
\end{matrix}
\right) =
\zero{15}{1}
\end{align}
which is equivalent to the following two equations
\begin{align}
\left(
\begin{matrix}
\id{9} -\ra_2 \otimes \rb_2  \zero{9}{1}\\
\id{3} \otimes (\tb_1^T) -\ua_1
\end{matrix}
\right) 
\left(
\begin{matrix}
vec(\rx)\\ \lambda
\end{matrix}
\right)  =
\zero{12}{1} \label{dix-sept}\\
\label{dix-huit}
( \id{3} - \ra_2) \tx  =-\lambda\ua_2
-\left(\id{3} \otimes (\tb_2^T)\right)vec(\rx)
\end{align}

The solution comes in three steps:

\begin{enumerate}
\item Scale factor estimation.
As in the proof of Proposition~\ref{deux_trans}.

\item Hand-eye rotation estimation.
Recall that the camera axis of rotation $\vect{n}_a$ and the robot axis of
rotation $\vect{n}_b$ are related by:
\[
\rx \vect{n}_b = \vect{n}_a
\]
which is similar to (\ref{ta=rtb}). 
%
%
Since $\tb_1$ and $\vect{n}_b$ are assumed to be non-parallel, they are
linearly independent. Therefore, we obtain, as in
the proof of Proposition~\ref{deux_trans}, a full-rank $(9 \times 9)$
system where $\rx$ is the only unknown:
\begin{multline}
\left(
\begin{matrix}
\id{3} \otimes (\tb_1^T)\\
\id{3} \otimes (\vect{n}_b^T)\\
\id{3} \otimes ((\tb_1 \times \vect{n}_b)^T)
\end{matrix}
\right)
vec(\rx)
\\ =
\left(
\begin{matrix}
\lambda \ua_1\\ \vect{n}_a\\ \lambda (\ua_1 \times \vect{n}_a)
\end{matrix}
\right)
\end{multline}

\item Hand-eye translation estimation.
We can insert the estimated $\rx$ and $\lambda$ into~(\ref{dix-huit}) and
obtain a system, where only $\tx$ is unknown. This system is always
under-constrained. Hence, it admits as solution 
any vector of the form 
\begin{equation}
\tx(\alpha) = \vect{t}_\perp + \alpha \vect{n}_a
\end{equation}
where $\alpha$ is any scalar value and $\vect{t}_\perp$ is a solution
in the plane of the camera motion. The latter vector is unique since
$\id{3}-\ra_1$ has rank~2 and the plane of motion is
\mbox{2-dimensional}. In practice, $\vect{t}_\perp$ can be obtained by an
SVD of $\id{3}-\ra_1$~\cite[\S2.6]{press92a}.   
\end{enumerate}
\end{proof}

The previous Lemma serves as a basis to the case of planar motions as:

\begin{proposition}
Two planar motions allow the
estimation of the hand-eye rotation $\rx$  
and the unknown scale factor $\lambda$ if one the following three
sets of conditions is fulfilled:
\begin{itemize}
\item the two motions are linearly independent pure translations
\item one of the two motions is a non-zero pure translation
\item the two motions contain a non-identity rotation and 
\[ (\id{3} - \rb_2)\tb_1-(\id{3}-\rb_1)\tb_2 \neq 0 \]
\end{itemize}
In the last two cases, the hand-eye translation can only be estimated
up to an unknown height $\alpha$ along the normal to the camera plane
of motion (Fig.~\ref{car}). 
\end{proposition}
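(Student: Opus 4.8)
The plan is to dispatch the three sufficient conditions one at a time, the first two reducing to results already proved and the third requiring a genuinely new argument.

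For the first condition, two linearly independent pure translations, there is nothing to do: this is exactly Proposition~\ref{deux_trans}. For the second condition, suppose one of the two motions, say the first, is a non-zero pure translation. If the second motion is also a pure translation we are back to the first condition (linear independence being what is needed); otherwise the second motion contains a non-identity rotation, and since both motions are planar the translation $\tb_1$ lies in the common plane of motion, hence is orthogonal to --- in particular not parallel to --- the rotation axis $\vect{n}_b$ of the second motion. That is precisely the hypothesis of Lemma~\ref{plan+trans}, which then yields $\rx$, $\lambda$ and $\tx$ up to an unknown height $\alpha$ along $\vect{n}_a$.

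The third condition --- both motions contain a non-identity rotation, with $(\id{3}-\rb_2)\tb_1-(\id{3}-\rb_1)\tb_2\neq 0$ --- is the substantial case, and my plan is to extract the missing metric information from the group commutator of the two Euclidean motions. By planarity, $\rb_1,\rb_2$ share the axis $\vect{n}_b$ and $\ra_1,\ra_2$ share the axis $\vect{n}_a=\rx\vect{n}_b$, so both pairs commute. Writing the hand-eye relations as $\matx{A}_i(\lambda)=\matx{X}\matx{B}_i\matx{X}^{-1}$ and using that conjugation is a homomorphism, I would form
\[
\matx{A}_1(\lambda)\matx{A}_2(\lambda)\matx{A}_1(\lambda)^{-1}\matx{A}_2(\lambda)^{-1}
=\matx{X}\bigl(\matx{B}_1\matx{B}_2\matx{B}_1^{-1}\matx{B}_2^{-1}\bigr)\matx{X}^{-1}.
\]
Because the rotation parts commute, both commutators have identity rotation part, i.e.\ they are pure translations; a short computation gives their translation vectors as $\vect{t}_c:=(\id{3}-\rb_2)\tb_1-(\id{3}-\rb_1)\tb_2$ on the robot side (this is exactly the vector in the hypothesis) and $\lambda\vect{u}_c$ with $\vect{u}_c:=(\id{3}-\ra_2)\ua_1-(\id{3}-\ra_1)\ua_2$ on the camera side, both computable from the data. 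Conjugating a pure translation by $(\rx,\tx)$ merely rotates its vector, so the two commutator translations are related by
\[
\rx\,\vect{t}_c=\lambda\,\vect{u}_c,
\]
which is formally the pure-translation relation~(\ref{ta=rtb}), with the hypothesis $\vect{t}_c\neq 0$ forcing $\vect{u}_c\neq 0$. From here I would reuse the pure-translation machinery: $\rx$ preserves norms, so $\lambda=\|\vect{t}_c\|/\|\vect{u}_c\|$; and since $\vect{t}_c$ lies in the range of each $\id{3}-\rb_i$ it is orthogonal to $\vect{n}_b$, hence $\vect{t}_c$ and $\vect{n}_b$ are linearly independent and (using $\det\rx=1$) $\rx(\vect{t}_c\times\vect{n}_b)=(\rx\vect{t}_c)\times(\rx\vect{n}_b)$. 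Stacking the three resulting relations $\rx\vect{t}_c=\lambda\vect{u}_c$, $\rx\vect{n}_b=\vect{n}_a$, $\rx(\vect{t}_c\times\vect{n}_b)=\lambda\,\vect{u}_c\times\vect{n}_a$ gives a full-rank $(9\times9)$ linear system in $vec(\rx)$, exactly as in the proofs of Propositions~\ref{prop_trans} and~\ref{deux_trans}, whose solution is automatically orthogonal. Finally, feeding the recovered $\rx$ and $\lambda$ back into the translation equation of one of the rotational motions leaves $\tx$ determined only up to the null space of $\id{3}-\ra_i$, i.e.\ up to a height $\alpha$ along $\vect{n}_a$, the in-plane part being obtained by an SVD as in Lemma~\ref{plan+trans}.

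I expect the obstacle to be entirely concentrated in this third case, and specifically in the realization that when the two rotation axes coincide the metric information hidden from the individual equations resurfaces in the commutator of the two motions --- and that the non-degeneracy hypothesis is exactly the statement that this commutator is a nontrivial translation. Once that is seen, everything else (computing the two commutator translations and recycling Propositions~\ref{prop_trans}--\ref{deux_trans} and Lemma~\ref{plan+trans}) is routine.
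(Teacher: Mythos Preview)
Your proof is correct. For the first two conditions you proceed exactly as the paper does, by invoking Proposition~\ref{deux_trans} and Lemma~\ref{plan+trans} respectively. For the third condition you take a different but equivalent route. The paper works inside the stacked linear system~(\ref{two_planar}): it discards the redundant rotation block (since the two rotations share an axis) and then forms the row combination $(\id{3}-\ra_2)L'_1-(\id{3}-\ra_1)L'_3$, which kills the $\tx$ term (because $\ra_1,\ra_2$ commute) and yields the virtual pure-translation relation $\rx\tb'_1=\lambda\ua'_1$ with $\tb'_1=(\id{3}-\rb_2)\tb_1-(\id{3}-\rb_1)\tb_2$. You obtain the \emph{same} relation by computing the group commutator $\matx{B}_1\matx{B}_2\matx{B}_1^{-1}\matx{B}_2^{-1}$ and its camera-side counterpart: since the rotation parts commute, both commutators are pure translations, and their translation vectors are precisely $\tb'_1$ and $\lambda\ua'_1$. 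The paper's row elimination is the linear-algebra shadow of your commutator, so the two arguments are algebraically the same; what your framing buys is a clean geometric reading of the hypothesis --- $(\id{3}-\rb_2)\tb_1-(\id{3}-\rb_1)\tb_2\neq 0$ says exactly that the two planar motions do not commute --- and it bypasses setting up the block system. After this key step both proofs finish identically, reducing to Lemma~\ref{plan+trans} to recover $\rx$, $\lambda$, and $\tx$ up to a height along $\vect{n}_a$.
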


\begin{proof}
The first set of conditions falls back into the pure translation case
and Proposition~\ref{deux_trans} apply. The second set of conditions
is contained in Lemma~\ref{plan+trans}.

Let us now show that the last set of conditions can be brought back to 
the second one. To do that, consider the system which is built upon the
two planar motions:
\begin{multline}
\label{two_planar}
\begin{matrix}
L_1 \rightarrow\\
L_2 \rightarrow\\
L_3 \rightarrow\\
L_4 \rightarrow
\end{matrix}
\left(
\begin{matrix}
\id{9} -\ra_1 \otimes \rb_1 &
\zero{9}{3}&\zero{9}{1}\\
\id{3} \otimes (\tb_1^T) & \id{3} - \ra_1&-\ua_1\\
\id{9} -\ra_2 \otimes \rb_2 &
\zero{9}{3}&\zero{9}{1}\\
\id{3} \otimes (\tb_2^T) & \id{3} - \ra_2&-\ua_2
\end{matrix}
\right)
\\ \times
\left(
\begin{matrix}
vec(\rx)\\
\tx\\ \lambda
\end{matrix}
\right)  =
\zero{15}{1}
\end{multline}
The block line $L_1$ and the third one $L_3$ of this system are equivalent since 
both motions have the same rotation axis. Hence, we can discard the
first one and obtain:
\begin{multline}
\begin{matrix}
L'_1 \rightarrow\\
L'_2 \rightarrow\\
L'_3 \rightarrow
\end{matrix}
\left(
\begin{matrix}
\id{3} \otimes (\tb_1^T) & \id{3} - \ra_1&-\ua_1\\
\id{9} -\ra_2 \otimes \rb_2 &
\zero{9}{3}&\zero{9}{1}\\
\id{3} \otimes (\tb_2^T) & \id{3} - \ra_2&-\ua_2
\end{matrix}
\right)
\\ \times
\left(
\begin{matrix}
vec(\rx)\\
\tx\\ \lambda
\end{matrix}
\right) =
\zero{15}{1}
\end{multline}
Consider now the linear combination 
$(\id{3} -\ra_2)L'_1-(\id{3}-\ra_1)L'_3$ which gives:
\begin{align}
\big[ & (\id{3} -\ra_2) \big(\id{3} \otimes (\tb_1^T)\big) \nonumber \\
& -(\id{3}-\ra_1)\big(\id{3} \otimes (\tb_2^T)\big) \big] vec(\rx)
\label{ligne1}\\
& + \big[ (\id{3} -\ra_2)(\id{3}-\ra_1)-(\id{3}-\ra_1)(\id{3} -\ra_2)
\big] \tx 
\label{ligne2}\\
& - \lambda \big[(\id{3} -\ra_2)\ua_1-(\id{3}-\ra_1)\ua_2 \big] =0
\label{ligne3}
\end{align}
As $\ra_1$ and $\ra_2$ have the same rotation axis, they commute and,
hence,  $(\id{3} -\ra_2)(\id{3}-\ra_1)-(\id{3}-\ra_1)(\id{3}
-\ra_2)=0$. Therefore, the term on line (\ref{ligne2}) is null.
As for the term on line~(\ref{ligne3}), let us denote it as
$\ua'_1$.

Let us now consider the first term~(\ref{ligne1}) and show that it can be
rewritten under the form $\rx\tb_1^{'T}$. To do that, recall
that $(\id{3} \otimes \tb_i^T) vec(\rx)=\rx \tb_i$.
Hence, the first term equals:
\[
(\id{3} -\ra_2)\rx \tb_1 - (\id{3}-\ra_1)\rx\tb_2
\]
Using $\ra_i \rx = \rx \rb_i$, we then obtain:
\[
\rx \big( \underbrace{(\id{3}-\rb_2) \tb_1 - (\id{3}-\rb_1)\tb_2}_{\tb'_1} \big)
\]
Consequently, $L'_1$ is equivalent to:
\[
\rx \tb'_1 = \lambda \ua'_1
\]
where we recognize the pure translation case. Hence,
system~(\ref{two_planar}) rewrites under the same form as
in~(\ref{plan+trans:eq}) of Lemma~\ref{plan+trans}. Therefore, a
solution exists  if the virtual robot pure translation $\tb'_1$ is not
parallel to $\vect{n}_b$. As both $\tb_1$ and $\tb_2$ are orthogonal to
$\vect{n}_b$, this condition reduces to a non zero condition on
$\tb'_1$, which is expressed as:
\[ (\id{3}-\rb_2) \tb_1 - (\id{3}-\rb_1)\tb_2 \neq 0 \]

\end{proof}

In conclusion, we exhibited sufficient conditions to obtain,
from two planar motions, the hand-eye rotation and the hand-eye
translation, up to a component perpendicular to the camera plane of
motion. In the case of a car, this unknown component can be
interpreted as a height with respect to the base of the
car~(Fig.~\ref{car}).

\subsection{The general case}
In the case of two independent general motions with non-parallel axes,
there exists a unique solution to the hand-eye calibration problem.
We obtain the same result for our hand-eye self-calibration problem:

\begin{proposition}
If the robot end-effector undergoes two independent general motions
with non-parallel axes, then the hand-eye transformation $(\rx,\tx)$
can be fully recovered, as well as the Euclidean reconstruction
unknown scale factor $\lambda$.
\end{proposition}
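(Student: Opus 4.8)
The plan is to reduce the general case to the already-established subcases by showing that two independent general motions with non-parallel rotation axes make the system~(\ref{syst}) full rank (modulo the unavoidable one-dimensional scaling ambiguity in $vec(\rx)$), and then to recover $\lambda$ from the orthogonality/unit-determinant constraint exactly as in Proposition~\ref{prop_trans}. First I would stack the two instances of~(\ref{syst}) for $i=1,2$ and examine the rotational block
\[
\begin{pmatrix}
\id{9}-\ra_1\otimes\rb_1\\
\id{9}-\ra_2\otimes\rb_2
\end{pmatrix}.
\]
By the eigenvalue argument already used in the pure-rotation analysis (Lemma~\ref{la_solution}), each block has rank~6 when $\theta_i$ is not a multiple of $\pi$, and the intersection of the two null spaces is exactly one-dimensional when the two rotation axes are non-parallel; call the spanning matrix $\matx{V}=vec^{-1}(\vect{v})$. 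Thus $\rx$ is determined up to scale by the rotational part alone, and $\det$ normalization pins down $\rx$ and simultaneously fixes the scale of $\vect{v}$.

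Next I would turn to the translational block. Once $vec(\rx)$ is known (up to the global scalar that will be absorbed into $\lambda$), equations~(\ref{kron.trans}) for $i=1,2$ read
\[
(\id{3}-\ra_i)\,\tx-\lambda\,\ua_i=-\bigl(\id{3}\otimes\tb_i^T\bigr)vec(\rx),\qquad i=1,2,
\]
a linear system of six equations in the four unknowns $(\tx,\lambda)$. The key observation is that $\id{3}-\ra_i$ has rank~2 with null space spanned by the camera rotation axis $\vect{n}_{a_i}=\rx\vect{n}_{b_i}$, and since the two axes are non-parallel, $\ker(\id{3}-\ra_1)\cap\ker(\id{3}-\ra_2)=\{0\}$. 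I would then show the $6\times4$ coefficient matrix $\bigl(\begin{smallmatrix}\id{3}-\ra_1&-\ua_1\\\id{3}-\ra_2&-\ua_2\end{smallmatrix}\bigr)$ has rank~4: a vector $(\tx,\mu)$ in its kernel forces $(\id{3}-\ra_i)\tx=\mu\ua_i$ for both $i$; comparing with the true relation $(\id{3}-\ra_i)\tx_{\text{true}}=\lambda\ua_i$ (which is~(\ref{ax=xb.trans}) evaluated at the true solution after substituting~(\ref{ax=xb.rot})) shows $(\id{3}-\ra_i)(\mu\tx_{\text{true}}-\lambda\tx)=0$, so $\mu\tx_{\text{true}}-\lambda\tx$ lies in the trivial intersection of the two kernels and hence $\tx$ is proportional to $\tx_{\text{true}}$ — but then $\lambda$ is forced by the determinant normalization already made, so the only freedom is $\mu=0,\tx=0$. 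Hence the system has a unique solution $(\rx,\tx,\lambda)$, and the solution necessarily coincides with the ground truth since the latter satisfies every equation.

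The main obstacle I expect is the bookkeeping around the single global scalar: the rotational block only determines $vec(\rx)$ up to a scale, so care is needed to state precisely at which point $\lambda$ is resolved. The cleanest route is to use the $\det(\trx)=1$ normalization as in the closing paragraph of the proof of Proposition~\ref{prop_trans}, which simultaneously orthonormalizes the recovered matrix and fixes $\lambda$; after that the translational step above is a genuine full-rank linear solve. A secondary, more technical point — which is why Lemma~\ref{la_solution} is relegated to an appendix — is verifying that the null-space matrix $\matx{V}$ really is a scalar multiple of a rotation (orthogonal rows) even with exact data; but since the present proposition is allowed to assume Lemma~\ref{la_solution}, I would simply cite it here rather than re-derive it.
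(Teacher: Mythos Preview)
Your two-step strategy---recover $\rx$ from the rotational block via Lemma~\ref{la_solution}, then solve the remaining $6\times4$ linear system for $(\tx,\lambda)$---is exactly the route the paper takes, and the paper, like you, simply \emph{asserts} that the second system has full rank. Where your argument breaks is in justifying that rank-$4$ claim. You write that the ``true relation'' is $(\id{3}-\ra_i)\tx_{\text{true}}=\lambda\,\ua_i$, obtained from~(\ref{ax=xb.trans}) ``after substituting~(\ref{ax=xb.rot})''. But~(\ref{ax=xb.trans}) actually rearranges to
\[
(\id{3}-\ra_i)\tx_{\text{true}}=\lambda\,\ua_i-\rx\tb_i,
\]
and nothing in~(\ref{ax=xb.rot}) eliminates the $\rx\tb_i$ term. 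The relation you quote holds only for pure rotations ($\tb_i=0$, Proposition~\ref{prop_rot}); for general motions your subtraction step $(\id{3}-\ra_i)(\mu\,\tx_{\text{true}}-\lambda\,\tx)=0$ simply does not follow, and the conclusion ``$\tx$ proportional to $\tx_{\text{true}}$'' is unsupported. A correct kernel argument goes differently: a nonzero kernel vector $(\vect{y},\mu)$ with $\mu\neq0$, after rescaling and conjugating by $\rx$, forces the existence of a point $\vect{w}$ with $(\id{3}-\rb_i)\vect{w}=\tb_i$ for both $i$, i.e., a common fixed point of the two end-effector screws---and it is \emph{this} that the ``independent general motions'' hypothesis must exclude.

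There is also a smaller confusion about where $\lambda$ enters. The rotational equation~(\ref{kron.rot}) is homogeneous in $vec(\rx)$ and contains no~$\lambda$; the determinant normalization $|\det\matx{V}|^{1/3}$ fixes only the scale of the null-space vector $\vect{v}$ and hence $\rx$, but cannot ``simultaneously fix~$\lambda$'' as you claim. Unlike the situation in Proposition~\ref{prop_trans}, here $\lambda$ appears exclusively in the translational block~(\ref{kron.trans}) and is recovered there, jointly with $\tx$, once $\rx$ is already in hand.
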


Using our formulation, one possibility to solve the whole system
in~(\ref{syst}) is to find its null space, which is a subspace of
$\Re^{13}$.  The latter subspace must be 1-dimensional and only depend on
$\lambda$, according to the sufficient condition for hand-eye
calibration. Hence, the solution to hand-eye self-calibration is a
$13 \times 1$ vector to be found in a 1-dimensional subspace. It can
therefore be extracted from this null space by applying the unity
constraint to the first 9 coefficients representing the hand-eye
rotation, as seen in the pure translation case. 

However, Wei {\em et al}~\citep{wei98a} remarked, in the case where
camera motions are obtained through pose computation, that the
accuracy of the simultaneous estimation of hand-eye rotation and
translation is not independent of the physical unit used for the
translation. By analogy with this remark, solving directly for the
whole system may yield the same dependence.
In addition,
such a solution does not guarantee that the estimated $\rx$
is an orthogonal matrix. Then, one has to perform a correction
of the result by applying the orthogonality constraint. However, this
correction is non-linear in essence and it is hence improbable to find
the corresponding correction on the hand-eye translation estimation.

On the opposite, a two-step solution, as in~\citep{tsai89},
guarantees an orthogonal estimate of the hand-eye rotation. Indeed,
the first step consists in the linear estimation of the hand-eye
rotation as in the case of pure rotations~(\ref{pour_svd}), which had
this property. Recall that the solution is given by estimating the
nullspace of
\begin{equation}
\left(
\begin{matrix}
\id{3}-\ra_1 \otimes \rb_1\\
\vdots\\
\id{3}-\ra_n \otimes \rb_n
\end{matrix}
\right) vec(\rx) = 0
\end{equation}
and, then, by proceeding to an orthogonalization step to
counterbalance the effect of noise. 

This part of the
solution is, in essence, not very different from the existing
approaches but, here again, we want to point out that our goal 
is to give a generic formulation in which the particular cases
and the general case can be unified.

As for the second step, it exploits the remaining lines
in~(\ref{syst}):
\begin{equation}
\left(
\begin{matrix}
\id{3}-\ra_1& -\ua_1\\
\vdots\\
\id{3}-\ra_n& -\ua_n
\end{matrix}
\right) 
\left(
\begin{matrix}
\tx \\ \lambda
\end{matrix}
\right) 
 = \left(
\begin{matrix}
- \rx \tb_1\\
\vdots \\ - \rx \tb_n
\end{matrix}
\right) 
\end{equation}

As soon as two general motions with non parallel axes are used, this system
has full rank. It thus yields a linear least squares solution to the hand-eye
translation and the scale factor.

\section{Experiments}
\label{expe}

In this section, we will first choose a distance to measure the errors
between rigid transformations since their group $SE(3)$ does not hold
an intrinsic metric~\citep{murray94a}. Second, we will show some
simulation results to test the robustness to noise of our method,
compared to the reference methods. Finally, we will give experimental
results in real conditions. Notice that more experimental results can 
be found in~\citep{mathese}.

In this section, we numbered the methods we compared as follows:
axis/angle method~\citep{tsai89} ({\bf M1}), dual quaternion
method~\citep{daniilidis96b} ({\bf M2}),  non-linear
minimization~\citep{horaud95e} ({\bf M3}), our linear formulation
adapted to the case where camera motions are obtained through pose
computation ({\bf M4}), and self-calibration ({\bf M5}).

\subsection{Error measurement}
To measure the errors in translation, we chose the usual relative
error in~$\Re^3$: \mbox{$\| \hat{\vect{t}}-\vect{t} \|/\| \vect{t} \|$}, where
the '$\hat{\ }$' notation represents the estimated value.

For the errors in orientation, no canonical measure is defined. We
chose the quaternion norm used in~\citep{daniilidis96b}: $\|
\hat{\quat{q}} - \quat{q} \|$ for its simplicity and its direct
relation to $\alpha$, the angle of the residual
rotation between these two orientations.
Indeed, if $\hat{\quat{q}}$ and
$\quat{q}$ are unitary, then $\|\hat{\quat{q}} - \quat{q} \| = 2
- 2 \cos \frac{\alpha}{2}$. It is thus strictly increasing from 0 to 4 
as $\alpha$ goes from 0 to $2 \pi$. Moreover, this metric avoids the
singularity in $\alpha=\pi$ appearing when using
geodesics~\cite[p.35]{samson91}. 

\subsection{Simulations}
We first performed simulations to gain some insight of the numerical
behavior of our linear method ({\bf M4})
with comparison to the reference methods ({\bf M1--M3}). We thus tested the
robustness of the methods to noise and their accuracy with respect to
the number of calibration motions in use.

\subsubsection{Simulation procedure}
For each simulation series and for each value of the parameter of
interest (noise, number of motions), we followed the same
methodology. First, we defined a hand-eye transformation by
random choice of the Roll-Pitch-Yaw angles of its rotation matrix as
well as of the coefficients of its translation vector, according to
Gaussian laws. Second, we similarly chose a sequence of robot motions
and defined, from it and the hand-eye transformation, the
corresponding camera motion sequence. Third, we added noise to the
camera motions (see below). Finally, we performed hand-eye calibration with the
various methods and compared their results to the initial hand-eye
transformation.

\subsubsection{Inserting noise}
We added noise to the camera translations $\vect{t}_{A_i}$ by defining 
$\tilde{\vect{t}}_{A_i}=\vect{t}_{A_i} + \nu \|\vect{t}_{A_i}\|
\vect{n}$ where $\nu$ is a scalar and $\vect{n}$ is a Gaussian 
3-vector with zero mean and unit variance (white noise).
As for the camera rotations, we added noise to their Roll-Pitch-Yaw
angles as \mbox{$\tilde{\alpha}= (1 + \nu r) \alpha$} where $\alpha$ is any
of these angles, $\nu$ is the same as for the translation and $r$ is a 
1-dimensional white-noise. Hence, $\nu$ defines a signal-to-noise ratio.

\subsubsection{Robustness to noise}
We tested for the value of $\nu$, making it vary
from 0 to 20\%
in two simulation series. In the first one,
we made 100 different choices of hand-eye transformations and motion
sequences for each noise level. These sequences contained only two
motions, with maximal amplitude of 1 m in translation and 180 $\deg$
in rotation.
Fig.~\ref{cmp_toutes_bruit} gathers the calibration errors. It shows 
that Tsai and Lenz's method ({\bf M1}) and ours ({\bf M4}) obtain the
highest accuracy in rotation. For translations, they are very powerful
as long as the noise level is low but are less accurate than the dual
quaternion method ({\bf M2}) or the non linear minimization method
({\bf M3}) when the noise level increases.


\begin{figure*}[t!h!]
\centerline{
\begin{tabular}{cc}
\resizebox{!}{5cm}{\includegraphics{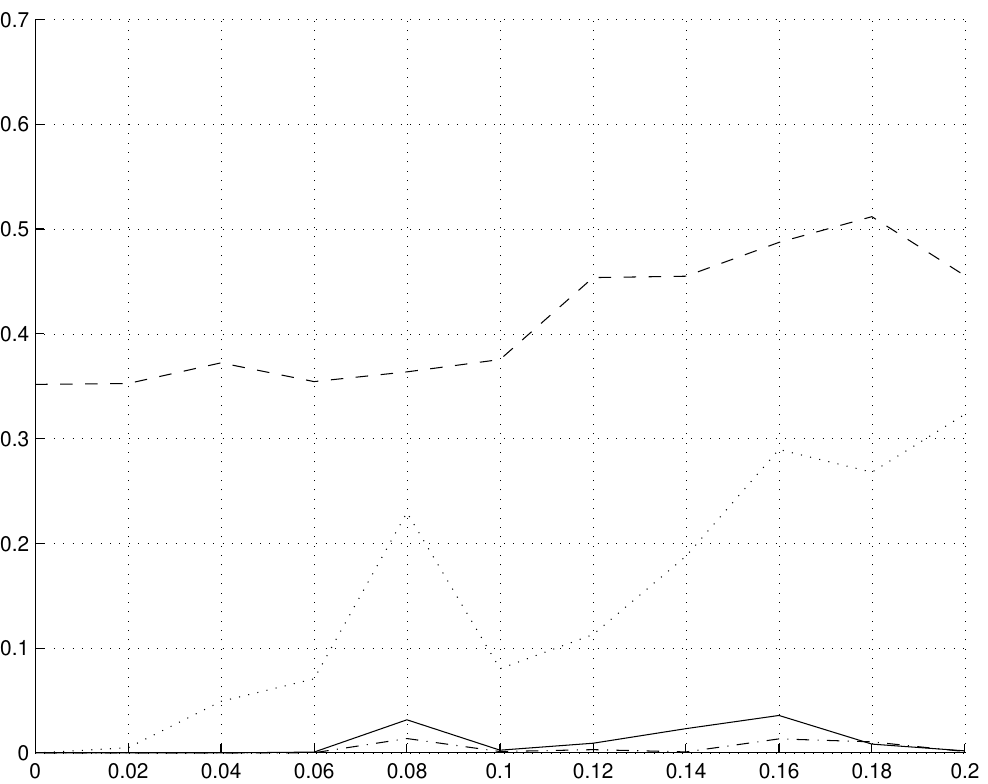}} &
\resizebox{!}{5cm}{\includegraphics{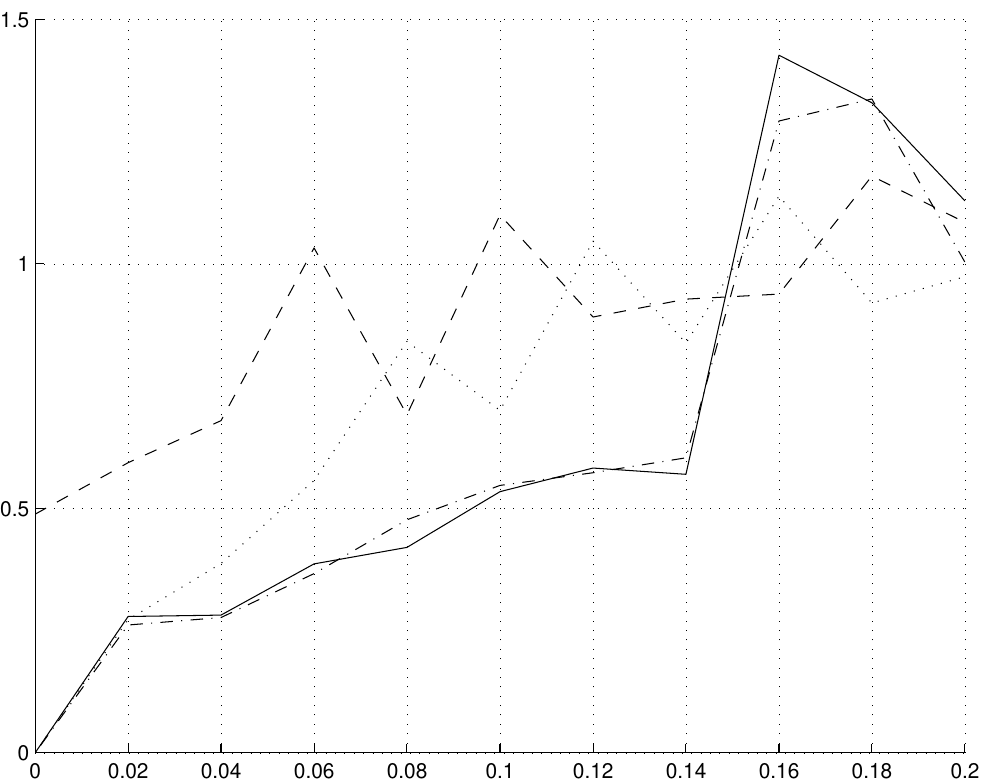}}
\end{tabular}
}
\caption{Rotation (left) and translation
(right) relative calibration errors with respect to noise level:
{\bf M1} (---),
{\bf M2} ($\cdots$), {\bf M3} 
(- -), {\bf M4} ($-\ \cdot$).} 
\label{cmp_toutes_bruit}
\end{figure*}


In a second simulation series, we almost repeated the first one, just
reducing the amplitude of the calibration motions to 2 cm in
translation and 10 $\deg$ in rotation. The results
(Fig.~\ref{cmp_toutes_bruit_petits_mvts}) show that our linear
formulation is less sensitive to this reduction than the other
methods.

\begin{figure*}[t!h!]
\centerline{
\begin{tabular}{cc}
\resizebox{!}{5cm}{\includegraphics{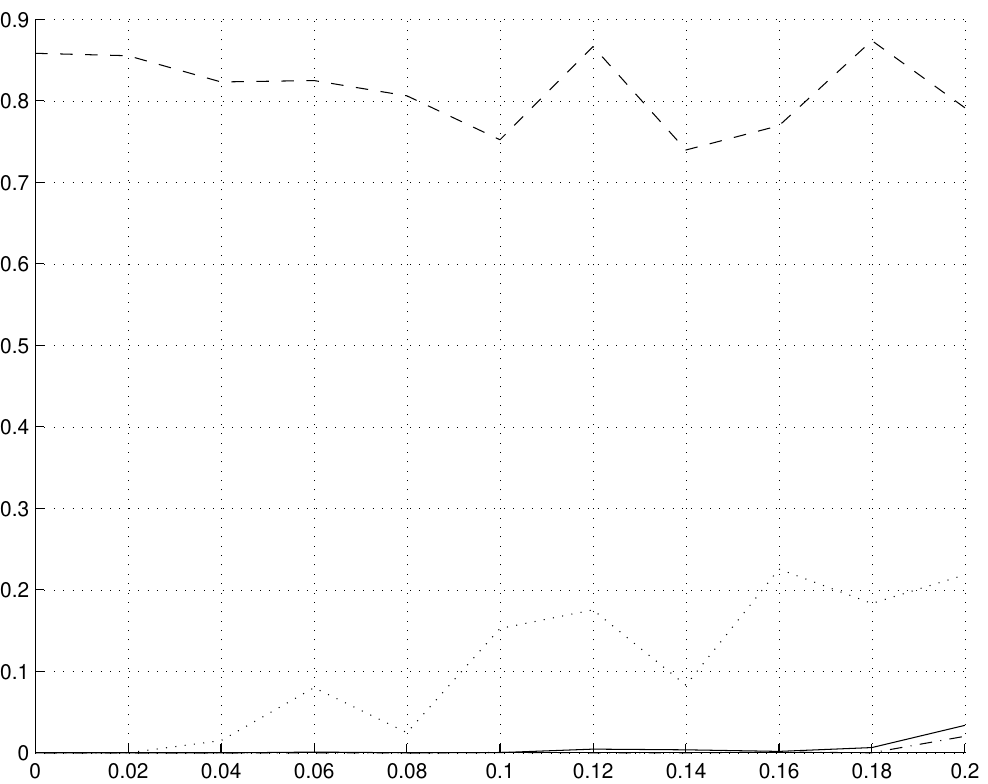}} &
\resizebox{!}{5cm}{\includegraphics{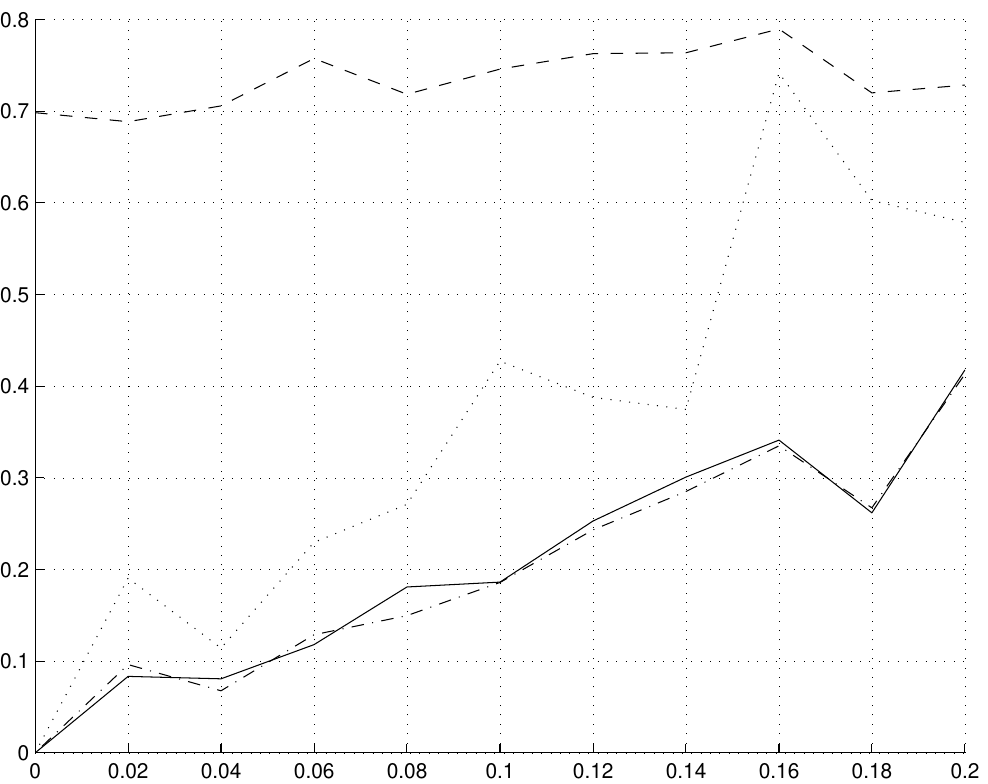}}
\end{tabular}
}
\caption{Calibration errors with
respect to noise level using
small motions (Same conventions as in Fig.~\ref{cmp_toutes_bruit})} 
\label{cmp_toutes_bruit_petits_mvts}
\end{figure*}

\subsubsection{Influence of motion number}

In this experiment, we kept the noise level constant ($\nu = 0.01$)
and generated sequences of varying length, i.e. from 2 to 15
calibration motions. Their amplitude was chosen to be
small (1 cm in translation and 10 $\deg$ in rotation).
For each sequence length, we proceeded to 100 random choices of
hand-eye transformations and calibration motions.
The results (Fig.~\ref{cmp_mvts}) show here again a higher accuracy
for our linear formulation.

\begin{figure*}[t!b!h!]
\centerline{
\begin{tabular}{cc}
\resizebox{!}{5cm}{\includegraphics{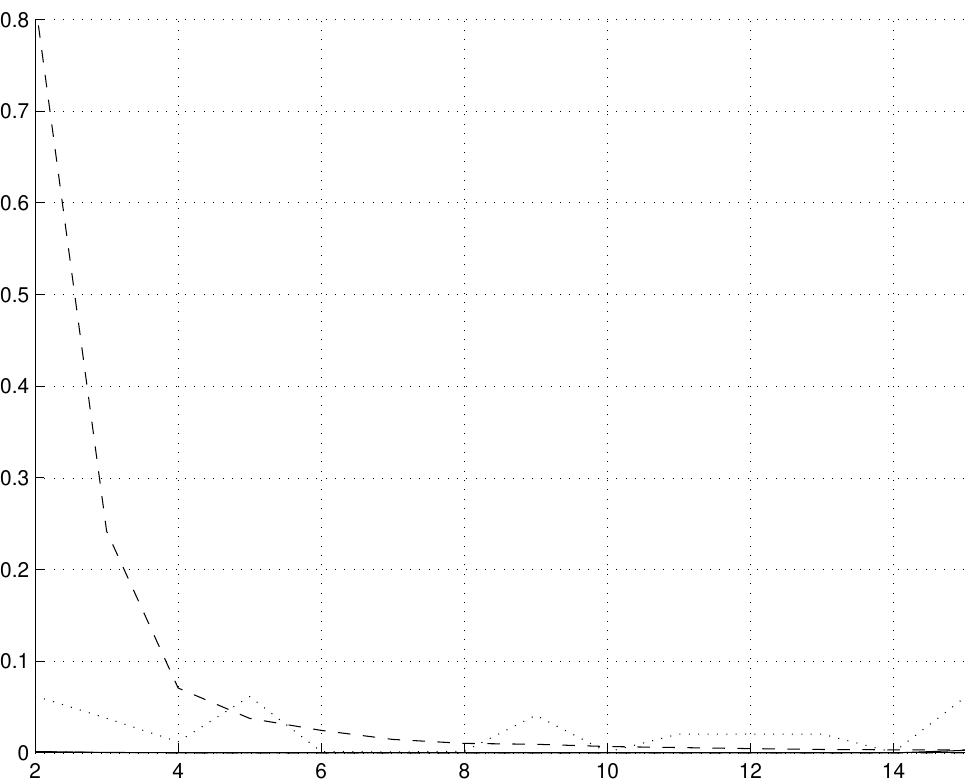}} &
\resizebox{!}{5cm}{\includegraphics{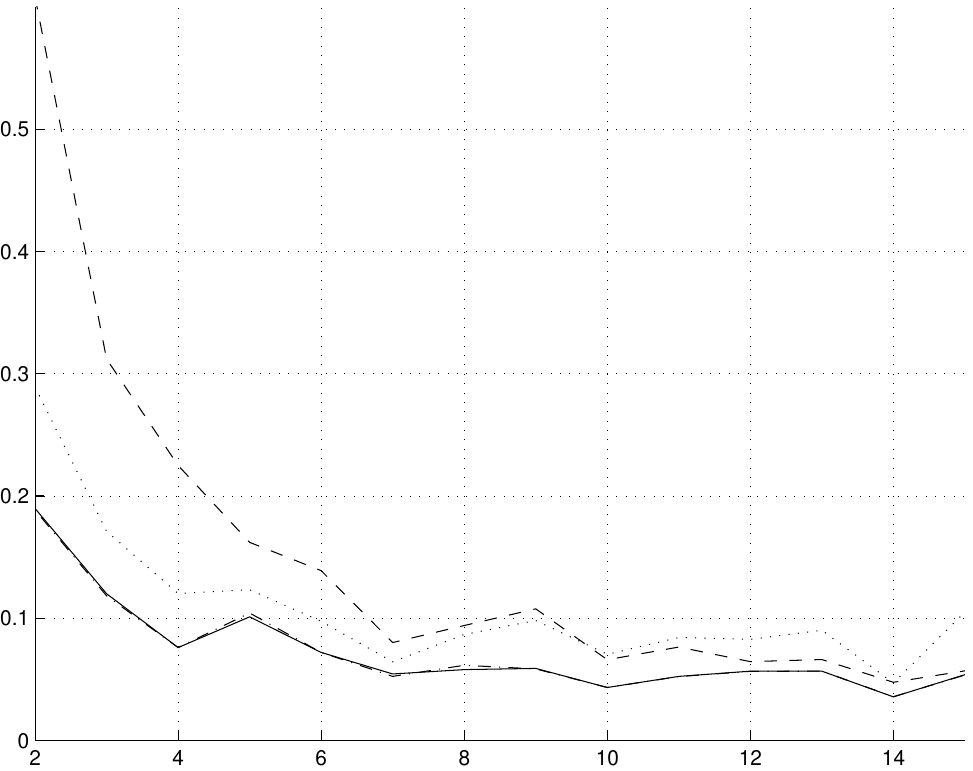}}
\end{tabular}
}
\caption{Calibration errors with
respect to the number of calibration motions using
small motions (Same conventions as in Fig.~\ref{cmp_toutes_bruit})}
\label{cmp_mvts}
\end{figure*}

\subsection{Experiments on real data}

When dealing with real data, no ground-truth value is available for
comparison. Therefore, we defined two measures of accuracy.
The first one compares, for
each motion $i$, $\matx{A}_i \matx{X}$ and $\matx{X} \matx{B}_i$. We
then gathered all these errors into RMS errors.
The second one is a relative error with respect to a reference
estimation of the
hand-eye transformation, computed by a RANSAC method as
follows. From a set of recorded 33 reference positions, we chose 100
sets of 11 positions (10 motions) among the 33 available, then we
computed from them 100 estimations of the hand-eye transformation and
finally we computed the ``mean'' rigid transformation of these estimations
(see~\citep{mathese} for details).

The following two experiments follow the same procedure. Firstly, a
single trajectory is recorded. Secondly, camera and robot motions are
estimated. Thirdly, the hand-eye transformation is estimated using
several methods. Our linear methods are used in their original form,
without any RANSAC method. Finally, each estimation is
measured as described above.

\begin{figure}[t]
\centerline{
\includegraphics[width=0.95\linewidth]{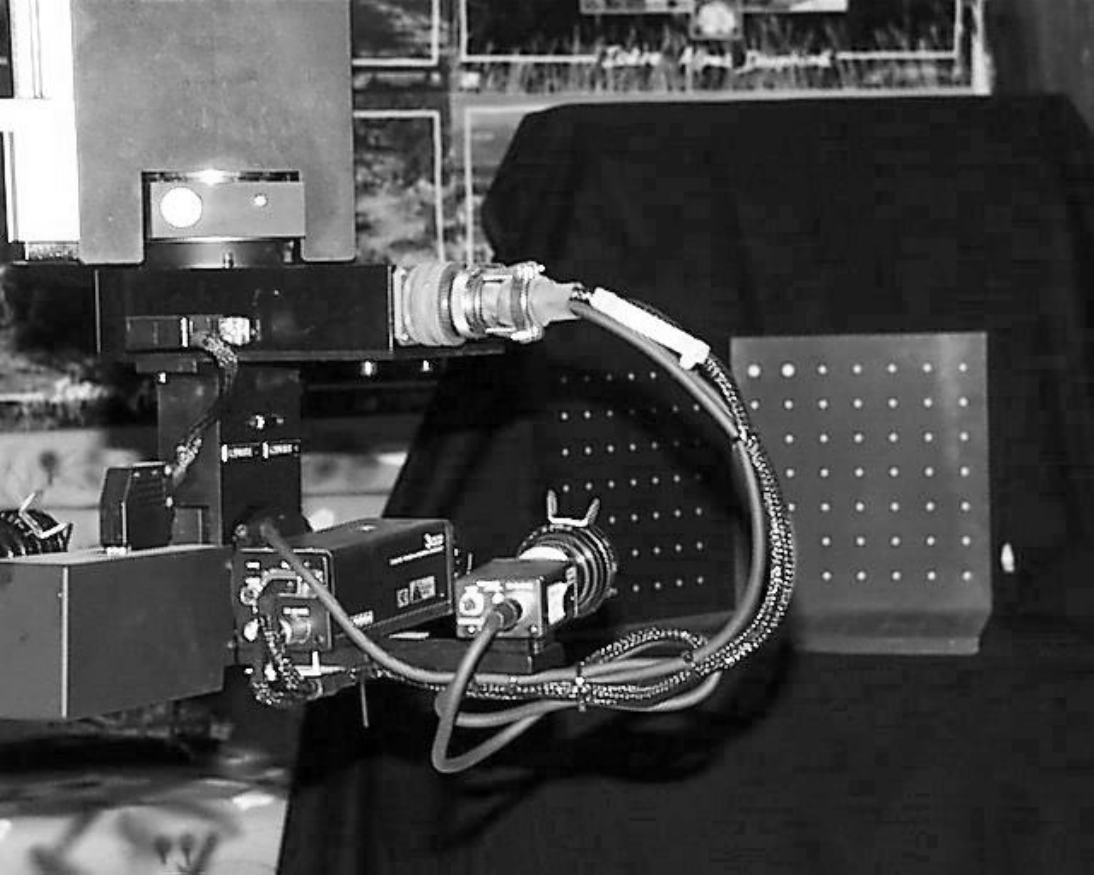}}
\caption{\label{mire} In Experiment 1, the camera observes a
calibration grid.}
\end{figure}

\subsubsection{Experiment 1}
To evaluate the correctness of the solution obtained by
hand-eye self-calibration, we had to compare it with those obtained by
classical calibration methods with the same data. 

Hence, we took images of our
calibration grid (Fig.~\ref{mire})  and performed hand-eye calibration with the
axis/angle method~\citep{tsai89} ({\bf M1}), the dual quaternion
method~\citep{daniilidis96b} ({\bf M2}), the non-linear
minimization~\citep{horaud95e} ({\bf M3}) and the linear formulation  ({\bf M4}).
Finally, using the same points, extracted from the images of the
calibration grid, but not their 3{\sc d} model, we applied the
hand-eye self-calibration method ({\bf M5}). The Euclidean \ddd
reconstruction method we used is the one proposed in~\citep{christy96a}.

In this experiment, we defined the trajectory by ordering the 33
reference positions so that they were as far as
possible from each other according to the advice given
in~\citep{tsai89}. The RMS errors between $\matx{A}_i \matx{X}$ and
$\matx{X} \matx{B}_i$ are displayed in
Fig.~\ref{expe1}. 
It can be seen that ({\bf M4}) gives the smallest
error in rotation due to the numerical efficiency of the SVD and thus obtains
also a reduced error in translation. As for ({\bf M5}), it gives
larger errors, as expected since the 3{\sc d} model is not
used. However, the degradation is rather small and can be explained by 
an approximative estimation of the intrinsic parameters.

\begin{figure}[t]
\centerline{
 \resizebox{!}{6cm}{
  \includegraphics{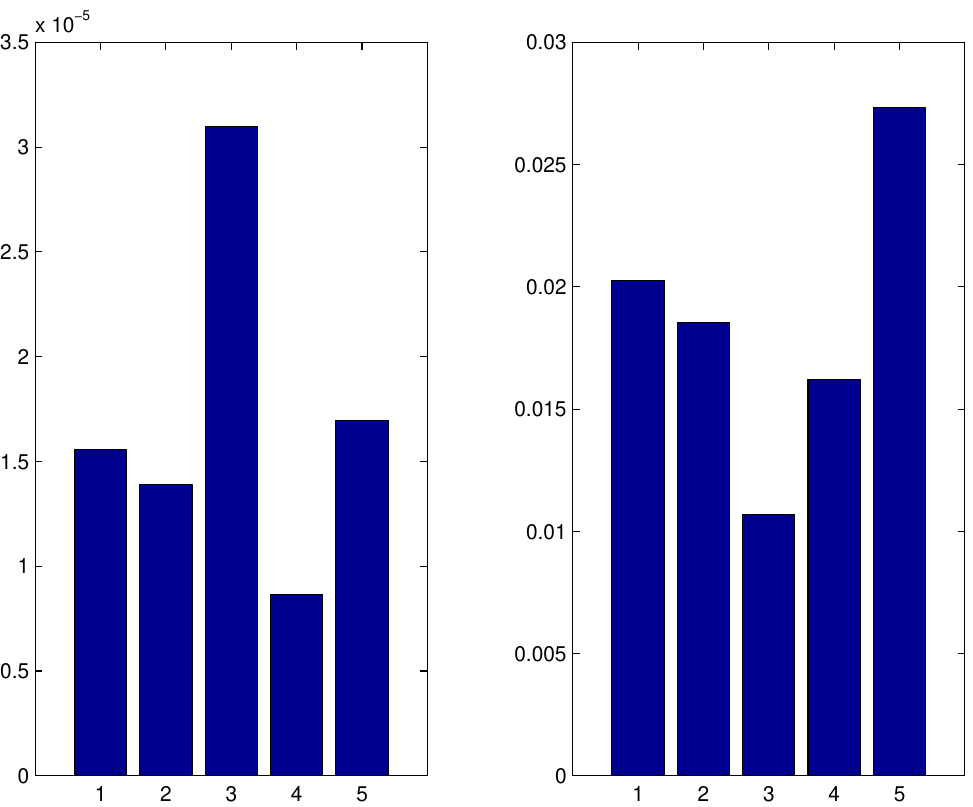}
                   }
           }
\caption{RMS errors in rotation (left) and translation
(right) with 33 images of a calibration grid for each method (see text).}
\label{expe1}
\end{figure}

Then, we compared the estimations obtained above to the reference
estimation. We gathered the errors in Table~\ref{table1}. It confirms
that the linear method is numerically very efficient, especially as
far as rotation is concerned.
Moreover, the self-calibration method yields a lower accuracy, which
nevertheless remains acceptable in the context of visual servoing~\citep{espiau93}.

\begin{table}[h!]
\caption{\label{table1} Comparison with a reference estimation of the hand-eye transformation.}
\centerline{
\begin{tabular}{|c|c|c|}
\hline
Method&Rotation error&Translation error\\
\hline
{\bf M1}&$1.10.10^{-5}$&0.018\\
{\bf M2}&$1.61.10^{-5}$&0.096\\
{\bf M3}&$9.77.10^{-5}$&0.149\\
{\bf M4}&$0.06.10^{-5}$&0.023\\
{\bf M5}&$1.99.10^{-5}$&0.322\\
\hline
\end{tabular}}
\end{table}

\begin{figure}[t]
\centerline{
\includegraphics[width=0.49\linewidth]{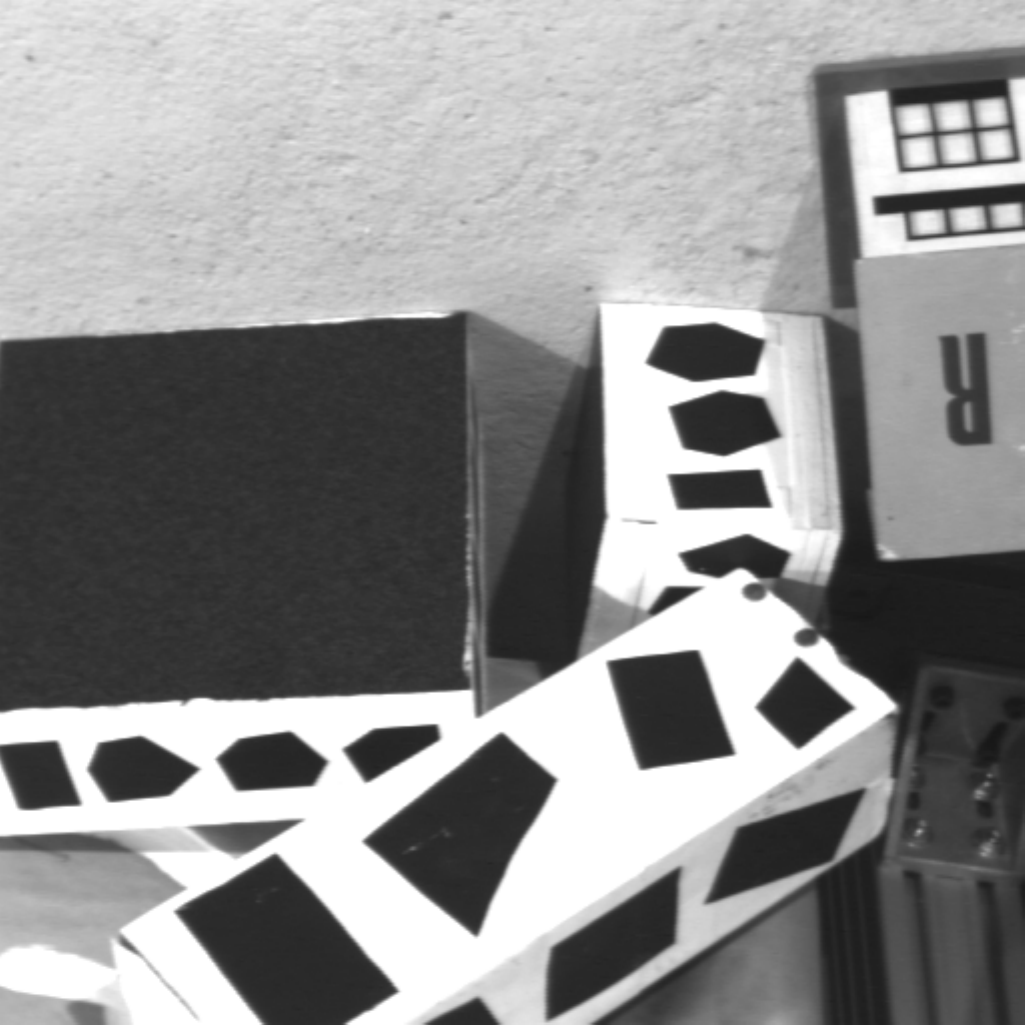}
\includegraphics[width=0.49\linewidth]{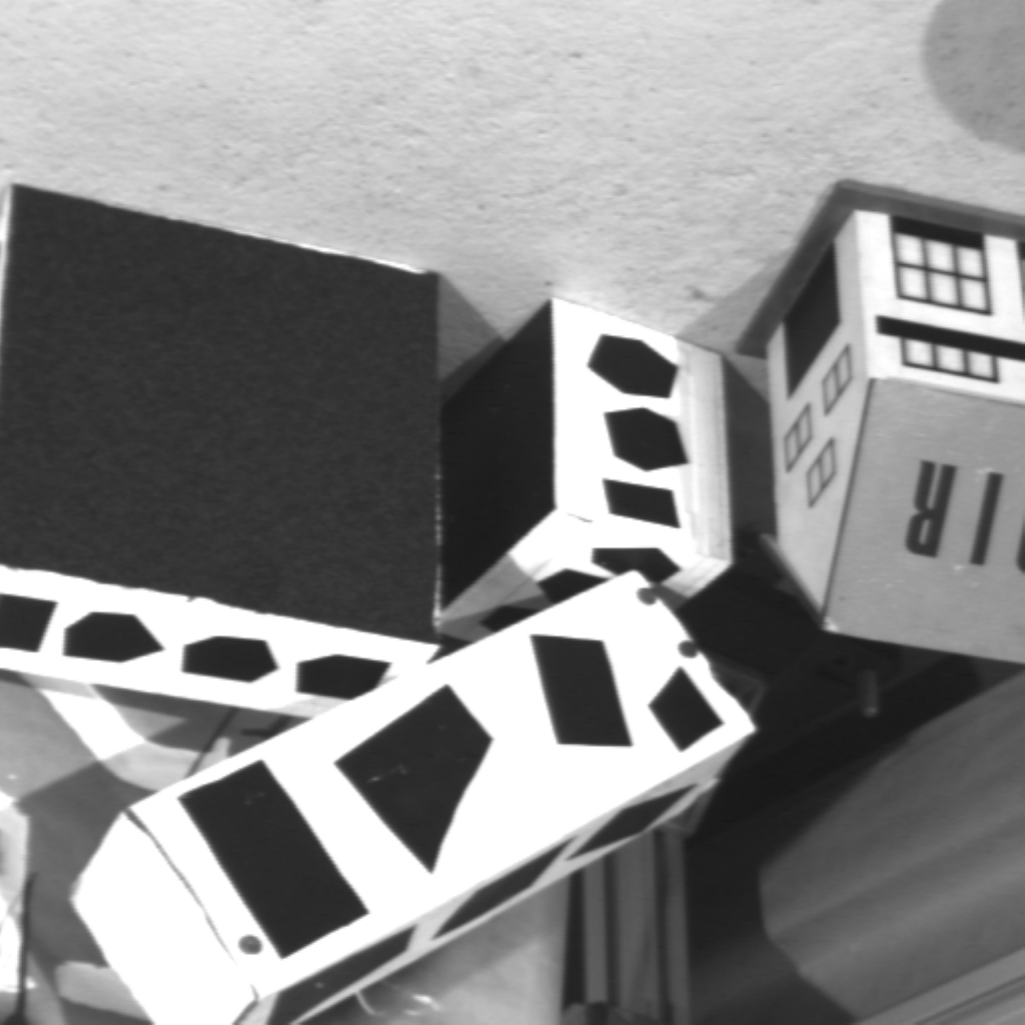}
}
\centerline{
\includegraphics[width=0.49\linewidth]{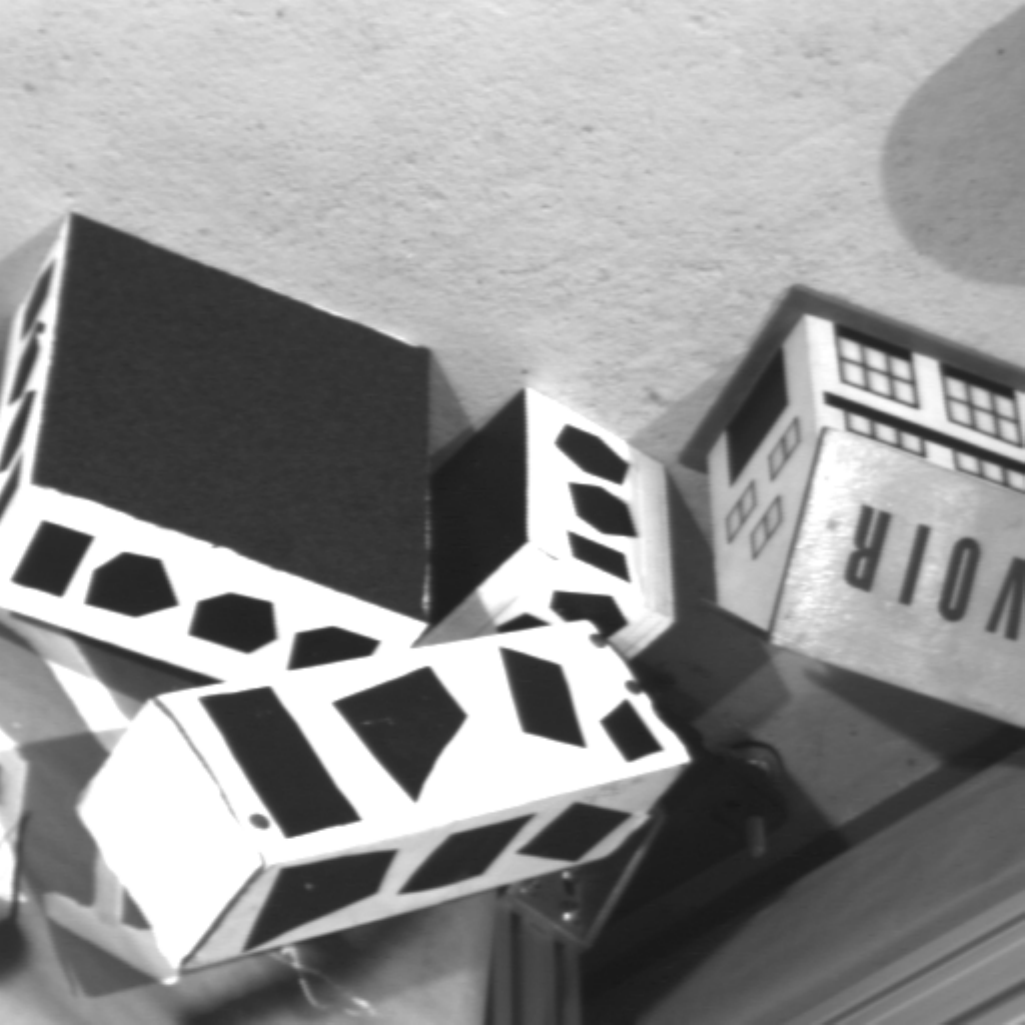}
\includegraphics[width=0.49\linewidth]{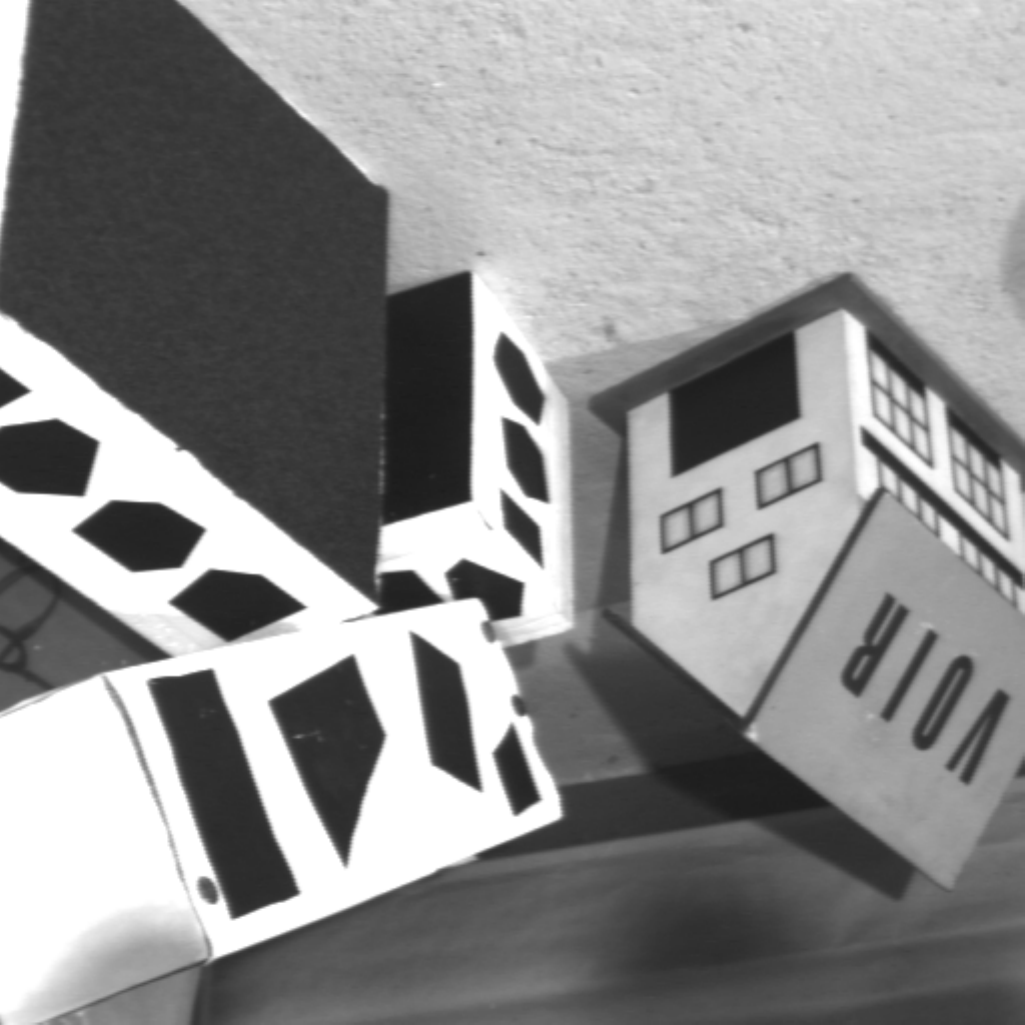}
}
\caption{Four images from a sequence used for
hand-eye self-calibration in Experiment 2.}
\label{images}
\end{figure}

\subsubsection{Experiment 2}
In a second experiment, we tested ({\bf M5}) with more realistic
images. Four positions were defined where the images shown in
Fig.~\ref{images} were taken. In the first image, points were
extracted and then tracked during the motion between each position of
the camera. Then, hand-eye self-calibration was performed upon the tracked
points.

In a goal of comparison, the blocks were replaced by the calibration grid and
the robot was moved anew to the four predefined positions. Then, hand-eye
calibration was performed with the images taken there.

The RMS errors between  $\matx{A}_i \matx{X}$ and
$\matx{X} \matx{B}_i$ in this experiment are given in Fig.~\ref{expe2}. They
show an awful behavior of the non linear minimization method,
probably due to the small number of data. They also show a slightly higher
degradation of the performance of ({\bf M5}) compared to the
others. Nevertheless, it remains in an acceptable ratio since the
relative error in translation is close to 3\%. 

We also compared the results
obtained in this experiment to the reference estimation
(Table~\ref{table2}). This comparison confirms the 
accuracy of both the linear method and the self-calibration scheme.

\begin{figure}[t]
\centerline{
 \resizebox{!}{6cm}{
  \includegraphics{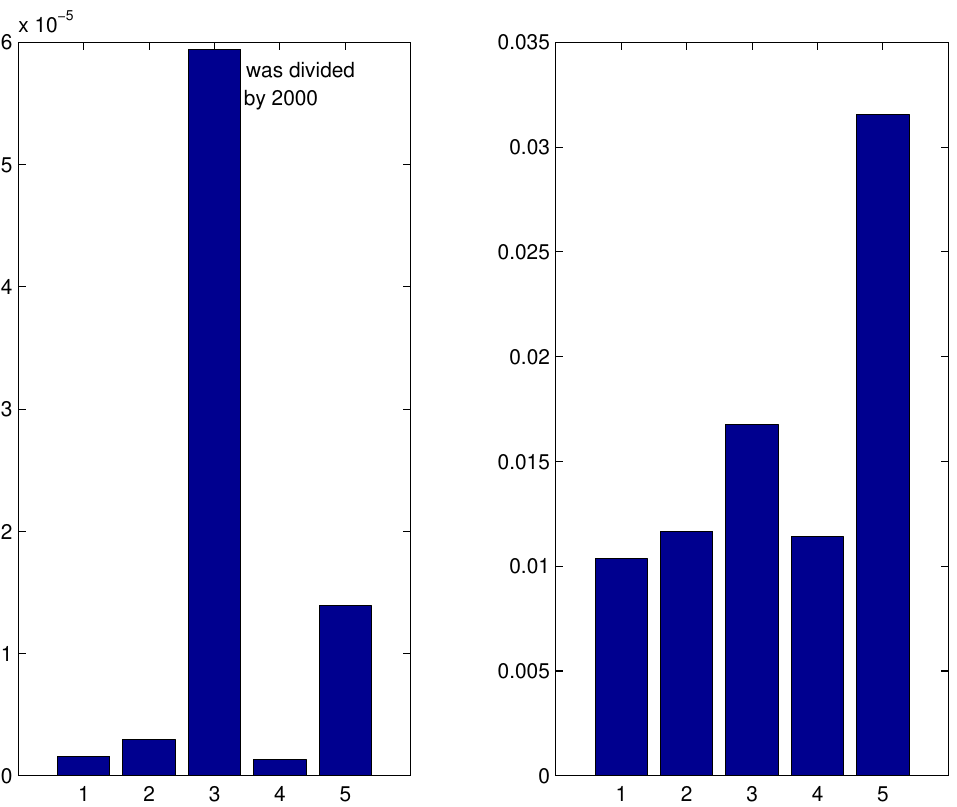}
                   }
           }
\caption{RMS errors in rotation (left) and translation
(right) with 4 images (see text).}
\label{expe2} 
\end{figure}

\begin{table}[h!]
\caption{\label{table2} Comparison with a reference estimation of the hand-eye transformation.}
\centerline{
\begin{tabular}{|c|c|c|}
\hline
Method&Rotation error&Translation error\\
\hline
{\bf M1}&$2.7.10^{-5}$&0.18\\
{\bf M2}&$2.8.10^{-5}$&0.22\\
{\bf M3}&1.82&1.01\\
{\bf M4}&$2.3.10^{-5}$&0.17\\
{\bf M5}&$2.8.10^{-4}$&0.20\\
\hline
\end{tabular}}
\end{table}

\section{Conclusion}
\label{conclusion}
We proposed a hand-eye self-calibration method which reduces
the human supervision compared with classical calibration methods.
The cost of releasing the human constraint is a small degradation of
the numerical accuracy. However, the obtained precision is good enough 
in the context of visual servoing.

This method is based on the structure-from-motion paradigm, rather
than pose estimation, to compute the camera motions and its
derivation includes a new linear formulation of hand-eye calibration.
The linearity of the formulation allows a simple algebraic
analysis. Thus, we  
determined the parts of the hand-eye transformation that can be
obtained from a reduced number of motions which does not allow a
complete calibration.
Moreover, the linear formulation provides improved numerical accuracy
even in the case where the camera/robot rotations have small
amplitude. 

However, one difficulty with the Euclidean \ddd reconstruction with a
moving camera is to be able to find reliable point correspondences
between images. 
The method proposed in \citep{christy96a} solves this problem by
tracking points along the motion. However, it requires that the points are
tracked from the beginning until the end of the robot trajectory.
This is a hard constraint since, in practice, one hardly obtains
enough points after a long trajectory.

Stereo-vision may offer the answer to this problem since it was shown  that
Euclidean reconstruction can be performed, without any prior knowledge, from two
Euclidean motions of a stereo pair~\citep{devernay96a}. This is fully in
coherence with our constraints. Moreover, this kind of method releases
the constraint on the presence of points along the whole sequence of
images. Exploiting this idea, Ruf {\em et al.} proposed a method
for projective robot kinematic calibration and its application to visual
servoing~\citep{ruf99}.

Finally, there is a pending question which was never answered:
``What are the motions for hand-eye (self-)calibration that yield
the higher numerical accuracy~?''

\appendix
\section{Proof of Lemma~\ref{la_solution}}
\label{preuve}

\subsection{Preliminary results}

\begin{result} 
\label{RxR'}
Given two similar rotation matrices $\matx{R}$ and $\matx{R}^{\prime}$ (i.e. there exists
a rotation matrix $\rx$ such that $\matx{R}^{\prime}=\rx
\matx{R} \rx^T$) then

1) if $\vect{v}$ is an eigenvector
of $\matx{R} \otimes \matx{R^{\prime}}$, then 
\mbox{$(\matx{I} \otimes \rx^T) \vect{v}$}
is an eigenvector of $\matx{R} \otimes
\matx{R}$ for the same eigenvalue;

2) if $\vect{x}$ is an eigenvector of $\matx{R} \otimes \matx{R}$,
then $(\matx{I} \otimes \rx) \vect{x}$ is an eigenvector of
$\matx{R} \otimes \matx{R^{\prime}}$ for the same eigenvalue.
\end{result}
\begin{proof}
1) Let $\vect{v}$ be an eigenvector of $\matx{R} \otimes
\matx{R^{\prime}}$ with eigenvalue $\lambda$.
Then, $(\matx{R} \otimes \matx{R^{\prime}}) \vect{v} = \lambda \vect{v}$.
Replacing $\matx{R^{\prime}}$ by $\rx \matx{R} \rx^T$ in
this relation gives:
\[
(\matx{R} \otimes \rx \matx{R} \rx^T) \vect{v} = \lambda
\vect{v}
\]

From $(\matx{A} \otimes \matx{B})(\matx{C} \otimes \matx{D}) = 
(\matx{A} \matx{C}) \otimes (\matx{B} \matx{D})$\citep{Bellman60}, we obtain:
\[
(\matx{I} \otimes \rx) (\matx{R} \otimes \matx{R}) (\matx{I}
\otimes \rx^T) \vect{v} = \lambda \vect{v}
\]

As $(\matx{A} \otimes \matx{B})^{-1} = \matx{A}^{-1} \otimes
\matx{B}^{-1}$\citep{Bellman60}, we derive the following relation:
\[
(\matx{I} \otimes \rx^T)^{-1} (\matx{R} \otimes \matx{R}) (\matx{I}
\otimes \rx^T) \vect{v} = \lambda \vect{v}
\]

Hence, $(\matx{R} \otimes \matx{R}) (\matx{I} \otimes \rx^T) \vect{v} =
\lambda (\matx{I} \otimes \rx^T) \vect{v} $.

2) Let $\vect{x}$ be an eigenvector of $\matx{R} \otimes \matx{R}$ with
eigenvalue $\alpha$.
Then,
\[
(\matx{R} \otimes \matx{R}) \vect{x} = \alpha \vect{x}
\]
As $(\matx{I} \otimes \rx^T) (\matx{I}
\otimes \rx)=\matx{I}$, we can insert it on both sides:
\begin{small}
\[
(\matx{R} \otimes \matx{R}) (\matx{I} \otimes \rx^T) (\matx{I}
\otimes \rx) \vect{x} = \alpha (\matx{I} \otimes \rx^T) (\matx{I}
\otimes \rx) \vect{x}
\]
\end{small}
which rewrites as:
\begin{small}
\[
(\matx{I} \otimes \rx) (\matx{R} \otimes \matx{R}) (\matx{I}
\otimes \rx^T) (\matx{I}
\otimes \rx) \vect{x} = \alpha (\matx{I} \otimes \rx)
\vect{x}
\]
\end{small}
Hence,
\[
(\matx{R} \otimes \matx{R^{\prime}}) (\matx{I} \otimes \rx)
\vect{x} = \alpha (\matx{I} \otimes \rx) \vect{x}.
\]
\end{proof}


\begin{result}
\label{R1R2=>I}
Let $\matx{R}_1$ and $\matx{R}_2$ be 2 rotation matrices with non
parallel axes and $\matx{R}_i, 2 < i \le n$, any rotation matrices. 
Let $\matx{R}$ be another rotation matrix. Then, 
\[
\left.
\begin{array}{c}
\matx{R}_1 \otimes \matx{R}_1\ vec(\matx{R}) = vec(\matx{R})\\
\matx{R}_2 \otimes \matx{R}_2\ vec(\matx{R}) = vec(\matx{R})\\
\vdots\\
\matx{R}_n \otimes \matx{R}_n\ vec(\matx{R}) = vec(\matx{R})
\end{array}
\right\}
\Rightarrow \matx{R}=\matx{I}_3
\]
\end{result}
\begin{proof}
The first two equations of the previous system are equivalent to
\[\begin{array}{l}
\matx{R}_1 \matx{R} = \matx{R} \matx{R}_1\\
\matx{R}_2 \matx{R} = \matx{R} \matx{R}_2
\end{array}\]
If $\matx{R}$ satisfies the first equation, then either $\matx{R}$ is
the identity or it has the same rotation axis as $\matx{R}_1$.
Similarly, it is either the identity or has the same rotation axis as
$\matx{R}_2$.
As $\matx{R}_1$ and $\matx{R}_2$ have different rotation axes, it must
be the identity.

With the same logic, adding
other rotations $\matx{R}_i, 2 < i \le n$ such that 
$\matx{R}_i \otimes \matx{R}_i\ vec(\matx{R}) = vec(\matx{R})$ does not change the
conclusion of the above result concerning $\matx{R}$.
\end{proof}

\begin{result}
\label{R1R2M=>sI}
Let $\matx{R}_1$ and $\matx{R}_2$ be two rotation matrices with non
parallel rotation axes and $\matx{R}_i, 2 < i \le n$, any rotation matrices.
Let \mbox{$\matx{M}\neq 0$} be a matrix such that
\[
\begin{array}{c}
\matx{R}_1 \otimes \matx{R}_1\ vec(\matx{M}) = vec(\matx{M})\\
\matx{R}_2 \otimes \matx{R}_2\ vec(\matx{M}) = vec(\matx{M})\\
\vdots\\
\matx{R}_n \otimes \matx{R}_n\ vec(\matx{M}) = vec(\matx{M})\\
\end{array}
\]
Then, 
\[ \exists \mu \neq 0, \matx{M}=\mu \matx{I}_3
\]
\end{result}
\begin{proof}
To write
\[ 
\matx{R}_1 \otimes \matx{R}_1\ vec(\matx{M}) = vec(\matx{M})
\]
is equivalent to say that $\matx{R}_1$ and $\matx{M}$ commute.
Therefore, $\matx{M}$ is of the form $\mu \matx{R}$ where
$\mu \neq 0$ and $\matx{R}$ is a rotation matrix which commutes with
$\matx{R}_1$. This can be easily seen by replacing $\matx{M}$ by its SVD.

Thus, $\matx{M} = \mu \matx{R}$ where $\matx{R}$ is such that:
\begin{equation}
\begin{array}{c}
\matx{R}_1 \matx{R} = \matx{R} \matx{R}_1\\
\matx{R}_2 \matx{R} = \matx{R} \matx{R}_2\\
\vdots\\
\matx{R}_n \matx{R} = \matx{R} \matx{R}_n\\
\end{array}
\end{equation}

From Preliminary result~\ref{R1R2=>I}, we obtain $\matx{R}=\id{3}$ and
$\matx{M} = \mu \id{3}$.
\end{proof}

\subsection{Proof of Lemma~\ref{la_solution}} 

System~(\ref{pour_svd}) is equivalent to
\begin{equation}
\begin{array}{c}
\ra_1 \otimes \rb_1 \vect{v} = \vect{v}\\
\ra_2 \otimes \rb_2 \vect{v} = \vect{v}\\
\vdots\\
\ra_n \otimes \rb_n \vect{v} = \vect{v}
\end{array}
\end{equation}
Under the assumption that the camera motions and the robot motions are 
rigidly linked by a fixed hand-eye transformation
($\rx, \tx$) and from Preliminary result~\ref{RxR'}, this
system becomes:
\begin{equation}
\begin{array}{c}
\rb_1 \otimes \rb_1 \vect{v}' = \vect{v}'\\
\rb_2 \otimes \rb_2 \vect{v}' = \vect{v}'\\
\vdots\\
\rb_n \otimes \rb_n \vect{v}' = \vect{v}'
\end{array}
\end{equation}
where $\vect{v}' = (\matx{I} \otimes \rx^T) \vect{v}$. 
Applying the result of Preliminary result~\ref{R1R2M=>sI}, we obtain that
$vec^{-1}(\vect{v}') = \mu \matx{I}_3$. Using the definition of
$\vect{v}'$ and the properties of the Kronecker product, we end up with:
\[
vec^{-1}(\vect{v}) \rx^T= \mu \id{3}
\]
where $\matx{V}=vec^{-1}(\vect{v})$. Hence,
\[
\matx{V}= \mu \rx
\]
Consequently, the matrix $\matx{V}$ extracted from the null space
of~(\ref{pour_svd}) is proportional to the hand-eye rotation.
The coefficient $\mu$ is obtained from the orthogonality
constraint: $det(\rx)=1$. The latter becomes
\mbox{$det(\matx{V})=\mu^3$} which finally gives:
\[
\mu = sgn(det(\matx{V}))\ |det(\matx{V})|^{1/3}
\]
\begin{flushright}
$\Box$
\end{flushright}
\section*{Acknowledgments}
This work was supported by the European Community through the Esprit-IV reactive LTR project number 26247 (VIGOR). During this work, Nicolas Andreff was a Ph.D. candidate at INRIA Rh\^one-Alpes.
\balance
\bibliographystyle{spbasic}

\end{document}